\newtheorem{assumption}{Assumption}
\newtheorem{proposition}{Proposition}
\newtheorem{theorem}{Theorem}
\theoremstyle{remark}
\newtheorem{remark}{Remark}
\newcounter{l1}
\newcounter{l2}
\newcounter{l3}
\newcommand{\bdotlist}{\begin{list}{$\bullet$}{}}
\newcommand{\bboxlist}{\begin{list}{$\Box$}{}}
\newcommand{\bbboxlist}{\begin{list}{\raisebox{.005in}{{\tiny $\blacksquare$ \ \ }}}{}}
\newcommand{\bdashlist}{\begin{list}{$-$}{} }
\newcommand{\blist}{\begin{list}{}{} }
\newcommand{\barablist}{\begin{list}{\arabic{l1}}{\usecounter{l1}}}
\newcommand{\balphlist}{\begin{list}{(\alph{l2})}{\usecounter{l2}}}
\newcommand{\bAlphlist}{\begin{list}{\Alph{l2}.}{\usecounter{l2}}}
\newcommand{\bdiamlist}{\begin{list}{$\diamond$}{}}
\newcommand{\bromalist}{\begin{list}{(\roman{l3})}{\usecounter{l3}}}
\providecommand{\norm}[1]{\lVert#1\rVert}
\newcommand{\beq}{\begin{equation}}
\newcommand{\eeq}{\end{equation}}
\newcommand{\tn}{\textnormal}
\DeclarePairedDelimiterX{\Norm}[1]{\lVert}{\rVert}{#1}
\DeclareMathOperator*{\argmin}{arg\,min}
\title{Online Learning for Predictive Control with Provable Regret Guarantees}
\author{Deepan Muthirayan, Jianjun Yuan, Dileep Kalathil, and Pramod P. Khargonekar% <- this % stops a space
\thanks{This work is supported in part by the National Science Foundation under Grant ECCS-1839429.
D. Muthirayan and P. P. Khargonekar are with the Department of Electrical Engineering and Computer Sciences, University of California Irvine, Irvine, CA (emails: deepan.m@uci.edu, pramod.khargonekar@uci.edu). Jianjun Yuan is with the Expedia Group (email: yuanx270@umn.edu). Dileep Kalathil is with the Department of Electrical and Computer Engineering, Texas A\&M University (email:dileep.kalathil@tamu.edu).}
}
\begin{document}

\maketitle
\thispagestyle{empty}
\pagestyle{empty} 

\begin{abstract}
    We study the problem of online learning in predictive control of an unknown linear dynamical system with time varying cost functions which are unknown apriori. Specifically, we study the online learning problem where the control algorithm does not know the true system model and has only access to a fixed-length (that does not grow with the control horizon) preview of the future cost functions. The goal of the online algorithm is to minimize the dynamic regret, defined as the difference between the cumulative cost incurred by the algorithm and that of the best sequence of actions in hindsight. We propose two different online Model Predictive Control (MPC) algorithms to address this problem, namely Certainty Equivalence MPC (CE-MPC) algorithm and Optimistic MPC (O-MPC) algorithm. We show that under the standard stability assumption for the model estimate, the CE-MPC algorithm achieves $\mathcal{O}(T^{2/3})$ dynamic regret. We then extend this result to the setting where the stability assumption holds only for the true system model by proposing the O-MPC algorithm. We show that the O-MPC algorithm also achieves $\mathcal{O}(T^{2/3})$ dynamic regret,  at the cost of some additional computation. We also present numerical studies to demonstrate the performance of our algorithm.
\end{abstract}

\section{Introduction} 

The control of dynamical systems with uncertainties such as modeling errors, parametric uncertainty, and disturbances is a central challenge in control theory. There is vast literature in the field on control synthesis for systems with such uncertainties. The {robust control} literature studies the problem of feedback control  with modeling uncertainty and disturbances  \cite{ZhouDoyleGlover-RobustControl-Book} while the {adaptive control} literature studies the control of systems with parametric uncertainty  \cite{sastry2011adaptive}. Typically, these classical approaches are concerned with stability and asymptotic performance guarantees. 

Recently, there has been increasing attention on the online control algorithms for dynamical systems with uncertain disturbances, system parameters and cost functions. This online control literature focuses on the finite time performance guarantees of the algorithms \cite{dean2018regret, mania2019certainty, agarwal2019online, simchowitz2020improper}. The typical objective in these works is to minimize the \textit{static regret}, which is defined as the difference between the cumulative cost incurred by the online algorithm and the best policy from a certain class of policies. This is a key difference and challenge compared to the conventional adaptive control literature, and it requires combining techniques from statistical learning, online optimization and control. Most of the existing works in online control consider the setting where the online algorithm has access to only the past observations (of states, cost functions, and disturbances). On the other hand, in many practical problems such as robotics \cite{shi2019neural}, energy systems \cite{vazquez2016model}, data-center management \cite{lazic2018data} etc., a finite-length preview of the future cost functions and/or disturbances are available to the control algorithm to compute the current control input. The question then is, {\it how do we develop online control algorithms that can exploit this preview to provably achieve better performance guarantees}?

In the control theory literature, Model Predictive Control (MPC) addresses the class of problems where a preview of the future cost functions are available to compute the current control input. The MPC is a well studied methodology in the control literature \cite{rawlings2017model, borrelli2017predictive}. However, the MPC literature primarily focuses on asymptotic performance guarantees. In sharp contrast to these existing works, our goal is to develop an \textit{online learning MPC} algorithm with provable finite time performance guarantees. We focus on minimizing the metric of \textit{dynamic regret}, which is defined as the difference between the cumulative cost of the online algorithm and that of the optimal sequence of control actions in hindsight (with full information). Thus, the dynamic regret is a stronger performance metric compared to the static regret. Our objective is to show that an {\it optimally designed online learning MPC algorithm can achieve sub-linear dynamic regret using the preview information under minimal standard  assumptions}.

\textbf{Related work:} Recently, some works have addressed characterization of online performance (dynamic regret) of MPC algorithms \cite{li2019online, yu2020power, lin2021perturbation}. In \cite{li2019online}, the authors characterize the effect of preview on the dynamic regret of any baseline policy. They present an algorithm that improves the dynamic regret of any baseline policy exponentially with the length of the preview. In \cite{yu2020power}, the authors present guarantees for dynamic regret for a fixed LQR cost function with preview of disturbances. In \cite{lin2021perturbation}, the authors extend these results to strongly convex cost functions. However, these works: (i)  require the preview to grow at least logarithmically with the time horizon, and (ii) assume the system model is known. Significantly different from these works, we consider the more challenging setting of online learning in predictive control where {\it only a fixed-length preview is available} and \textit{the system model is unknown}.

Another recent work \cite{lale2021model} has also studied the problem of online learning in predictive control with a certainty equivalent approach, where the estimated parameter is directly used to compute the control policy. They consider a setting similar to ours, an unknown dynamical system without disturbances, with noise perturbed observations of the state. They show that their approach achieves $\mathcal{O}(T^{2/3})$ regret. However, this approach crucially depends on a major assumption that the MPC policy computed using the  estimated parameter is stabilizing for all systems within a certain radius from this estimate. In contrast, we develop a novel approach using the principle of optimism for  online learning predictive control, where the stability assumption is required only for the true underlying system, not for all the systems whose parameters lie within a ball around the true system parameters. We show that under this minimal and standard assumption, our proposed approach achieves  $\mathcal{O}(T^{2/3})$ dynamic regret.

Recently, many works have studied the online regret performance in control problems with  time-varying costs, disturbances and known system model \cite{abbasi2014tracking,cohen2018online,agarwal2019online,agarwal2019logarithmic,goel2019online}. A few others have also studied the problem with unknown linear systems. In \cite{dean2018regret}, the authors provide an algorithm for the LQR problem with unknown dynamics that achieves a regret of $\mathcal{O}(T^{2/3})$. In \cite{cohen2019learning}, the authors improve this result by providing an algorithm that achieves a regret of $\mathcal{O}(T^{1/2})$ for the same problem. In \cite{simchowitz2020improper}, the authors generalize these results to provide sub-linear regret guarantee for online control with partial observation for both known and unknown systems. However,  these works do not address online learning in predictive control which is the focus of this paper. 

\textbf{MPC:} Many MPC-based methods have been proposed for managing disturbances and uncertainties in the system dynamics. For example, some works handle disturbances or uncertainties by robust or chance constraints \cite{langson2004robust,goulart2006optimization,limon2010robust, tempo2012randomized, goulart2016robust}. Adaptive MPC techniques that adapt online when the system model is unknown have also been proposed \cite{fukushima2007adaptive, adetola2009adaptive, aswani2013provably,tanaskovic2019adaptive, bujarbaruah2019adaptive}. These methods primarily focus on constraint satisfaction, stability and in some cases performance improvements. In contrast to these works, {we consider non-asymptotic performance of an online MPC algorithm}. There are a  number  of works that provide performance analysis of MPC under both time-invariant costs \cite{angeli2011average, grune2014asymptotic, grune2015non} and time varying costs \cite{ferramosca2010economic, angeli2016theoretical, ferramosca2014economic, grune2017closed}. However, most of these studies also focus on asymptotic performance. 

\textbf{Main Contributions:} We address the problem of online learning in predictive control of an unknown linear dynamical system with time varying cost functions. We assume that \textit{the system model is unknown} to the control algorithm a priori and the control algorithm has only access to a \textit{fixed-length  preview} of the  future  cost functions. We propose a novel online learning MPC called Optimistic MPC (O-MPC) algorithm for this setting. We show that the O-MPC algorithm achieves a \textit{sublinear dynamic regret of $\mathcal{O}(T^{2/3})$, under a standard assumption used for establishing the asymptotic stability of MPC controllers for the true underlying system}. We also propose a computationally efficient algorithm called CE-MPC and prove that this algorithm achieves $\mathcal{O}(T^{2/3})$ under the extension of the standard stability assumption to the estimated model. \textit{To the best of our knowledge, this is the first work that gives a sub-linear dynamic regret guarantee for the online learning MPC problem with unknown system parameter and time varying cost functions under standard assumptions}.

\subsection{Notation}

We denote the spectral radius of a matrix $A$ by $\rho(A)$, the 2-norm of a vector by $\norm{\cdot}$, the Frobenious norm of a matrix $X$ by $\norm{X}_{F}$,  the non-negative part of the real line by $\mathbb{R}_{+}$, the discrete time interval from $m_1$ to $m_2$  by $[m_{1}, m_{2}]$, the sequence $(x_{m_1}, x_{m_1+1}, ..., x_{m_2})$ compactly by $x_{m_{1}:m_{2}}$. %and the $i$th element of a vector $x$ by $x(i)$. 
We denote the $j$th element of a vector $x$ by $x[j]$. We denote the $m-$ary cartesian power of a set $\mathcal{X}$ by $\mathcal{X}^m$. When a sequence of $m-$ dimensional vectors $x_{m_{1}:m_{2}}$ are i.i.d. over the support $\mathcal{X}^m$, we denote this by $x_t \stackrel{\tn{i.i.d}}{\sim} \mathcal{X}^m$, where it is implicit that $t \in [m_1,m_2]$.

\section{Problem Formulation and Preliminaries} 
\label{sec:problemformulation}

\subsection{Problem Statement}

We consider the online control of an {unknown} and {partially observed} linear dynamical system. The system evolution and the observation models are given by the equations
\begin{equation}
\label{eq:stateequation}
x_{t+1} = A^{\star} x_t +B^{\star} u_t, \hspace{0.5cm} ~~ y_t = x_t + \epsilon_t,
\end{equation}
where $x_{t} \in \mathbb{R}^n, u_t \in \mathbb{R}^m, y_{t} \in \mathbb{R}^n,\epsilon_t \in \mathbb{R}^n$ are the state of the system, control action, observation, and observation noise at time $t$, respectively. The system model is characterized by the parameters $ A^{\star} \in \mathbb{R}^{n \times n} $ and $ B^{\star} \in \mathbb{R}^{n \times m}$. For conciseness, we denote $\theta^{\star} = [A^{\star}, B^{\star}]$, and we assume that $\theta^{\star} \in \Theta \subset \mathbb{R}^{n \times (n+m)}$, where $\Theta$ is a known compact set.  

A control policy $\pi$ selects a control action $u^{\pi}_{t}$ at each time $t$ depending  on the available information, resulting in a sequence of  actions  $u^{\pi}_{1:T}$ and the corresponding state trajectory $x^{\pi}_{1:T}$. The cumulative cost of a policy $\pi$ under the system dynamics \eqref{eq:stateequation}  is given by 
\begin{align}
\label{eq:cumulative-cost}
 {J}_{T}(\pi; \theta^{\star}) = \sum_{t = 1}^T c_t(x^{\pi}_{t},u^{\pi}_t),
\end{align}
where $c_{t}$ is the cost function at time $t$. The typical goal is to  find the optimal policy $\pi^{\star}$ such that $\pi^{\star} = \argmin_{\pi} J_{T}(\pi; \theta^{\star})$. Computing $\pi^{\star}$ hence requires the knowledge of the system model $\theta^{\star}$ and the entire sequence of cost functions $c_{1:T}$. 

In most real-world control problems, it is not possible to find the optimal policy directly as described above because of two important  practical concerns: $(i)$ the true system  parameter $\theta^{\star}$ may be unknown to the decision maker a priori, and $(ii)$ the current and future cost functions, $c_{t:T}$, may be unknown to the decision maker at any time step $t$.  The  policy  that can achieve the minimum possible cumulative cost should then depend on the information available to the policy at  each time step.

In this work, we consider a setting where the decision maker (control policy) \textit{does not know the system parameter $\theta^{\star}$ a priori}, and has to learn the system parameter from the online observations.  Moreover, the policy has only access to a  \textit{fixed-length   preview of the next $M$ cost functions}, $c_{t:t+M-1}$, for making the control decision $u_{t}$ at each time step $t$. More precisely,  the  policy has only the following information available at each time $t$ for selecting the  action $u_{t}$: $(i)$ past observations $y_{1:t-1}$, current observation  $y_{t}$, and  past control inputs $u_{1:t-1}$, $(ii)$ past cost functions $c_{1:t-1}$,   and  $(iii)$ a {preview} of the next $M$ cost  functions $c_{t:t+M-1}$. The policy has to learn the unknown system parameter from the online observations and adapt with respect to the revealed future cost functions. So, such a policy is called an online learning policy. 
%  We denote such a policy as $u_{t} = \pi(y_{1:t}, u_{1:t-1}, c_{t:t+M-1}).$

The performance of an online learning  policy $\pi$ is measured in terms of the \textit{dynamic regret}, defined as
\begin{align}
\label{eq:dynamic-regret-defn}
R_{T}(\pi) = J_{T}(\pi; \theta^{\star})  - J_{T}(\pi^{\star}; \theta^{\star}).
\end{align}
In other words, {dynamic regret} is the difference between the policy $\pi$ and that of the best policy $\pi^{\star}$ which has the complete information of the model parameter and loss functions. Our goal is to find an online learning policy that minimizes the dynamic regret. We note that the dynamic regret is a stronger performance metric compared to the more commonly used static regret  \cite{agarwal2019online, simchowitz2020improper}  where the cost of the online algorithm is compared with that of the best fixed policy from a specific class. 

We make the following assumptions on the system model.  
\begin{assumption}[System model]
\label{ass:system-model}
$(i)$ The set of possible system parameters $\Theta$ is a known compact set. Moreover, $\norm{\theta}_F \leq S, \ \forall \ \theta \in \Theta$.  \\
$(ii)$  $\rho(A^{\star}) < 1$, where $\rho(\cdot)$ is the spectral radius. The pair $(A^\star, B^\star)$ is controllable. \\
$(iii)$ %The observation noise $\epsilon_{t}$ is a martingale difference sequence w.r.t. a filtration $\mathcal{F}_{t}$. Also, the observation noise  is uniformly bounded, i.e.,  $\norm{\epsilon_t}_2 \leq \epsilon_c,$~ $\forall t$.\\
The observation noise $\epsilon_{t}$  is uniformly bounded, i.e.,  $\norm{\epsilon_t}_2 \leq \epsilon_c,$~ $\forall t$.\\
$(iv)$ The cost functions $c_{1:T}$ are continuous and locally Lipschitz with a uniform Lipschitz constant for all $t$. 
\end{assumption}
The assumptions on the  boundedness of $\Theta$ and the spectral radius are standard in the online learning and control literature \cite{abbasi2011regret, dean2018regret, cohen2019learning, simchowitz2020improper}. %The assumption that the noise sequence is a martingale difference sequence is also standard in prior works with stochastic disturbances and noise \cite{abbasi2011regret, dean2018regret, cohen2019learning}. Our assumption that the noise is bounded is similar to \cite{simchowitz2020improper}, which addresses the online control of unknown linear systems with general cost functions.
Our assumption that the noise is bounded is similar to \cite{simchowitz2020improper}, which is the only other work that also studies online control of unknown systems with general cost functions. Similar to \cite{lale2021model}, which is the closest to our work, we also do not consider stochastic disturbances in our dynamics. The problem with disturbances is more challenging, although we believe the proof techniques we employ can be extended to the setting with the disturbances. We plan to address the problem with disturbance as a subsequent work. 

% We give a motivating example of the LQ tracking controller with preview, which is an important  application in a number of areas  \cite{anderson2007optimal}.
% % \begin{example}[LQR with preview]
% % \label{ex:tracking}
% % %The goal of a tracking controller is to track a given sequence of way points $\bar{x}_{1:T}$ at the minimum cumulative cost. 
% % In the LQR problem, the dynamics is assumed to be linear as in \eqref{eq:stateequation}, and the cost is quadratic of the form $c_{t}(x, u) =  x^{\top} Q_{t}x + u^{\top} R_{t} u_{t}$. Often, only a finite preview of the way points $\bar{x}_{t:t+M-1}$ is available before making the decision $u_{t}$, which is equivalent to having a finite preview of cost functions $c_{t:t+M-1}$. In addition to this, the system parameter $\theta^{\star}$ may also be unknown a priori. 
% % \end{example}
% \begin{example}[Tracking controller]
% \label{ex:tracking}
% The goal of a tracking controller is to track a given sequence of way points $\bar{x}_{1:T}$ at the minimum cumulative cost. In the linear quadratic tracking problem, the dynamics is assumed to be linear as in \eqref{eq:stateequation}, and the cost is quadratic of the form $c_{t}(x, u) =  (x - \bar{x}_{t})^{\top} Q_{t}  (x - \bar{x}_{t}) + u^{\top} R_{t} u_{t}$. Often, only a finite preview of the way points $\bar{x}_{t:t+M-1}$ is available before making the decision $u_{t}$, which is equivalent to having a finite preview of cost functions $c_{t:t+M-1}$. In addition to this, the system parameter $\theta^{\star}$ may also be unknown a priori. 
% \end{example}

\subsection{Model Predictive Control: Preliminaries}

Model Predictive control is one of the most popular approaches for  control design when only a preview of the cost functions are available \cite{rawlings2017model, borrelli2017predictive}. The standard MPC algorithm uses the knowledge of the system parameter $\theta = [A(\theta), B(\theta)]$ to compute the optimal control sequence for that system. Algorithm \ref{alg:rhc} gives the formal description of the MPC algorithm with an $M$-step preview.  Given the current time step $t$, current state $x_{t}$, preview of the cost functions $c_{t:t+M-1}$, and the system parameter $\theta$ as the input, the MPC algorithm gives the control action $u_{t} = \texttt{MPC}(t, x_{t}, c_{t:t+M-1}, \theta)$ as the output. 
\begin{algorithm}[b!]
\begin{algorithmic}[1]
\caption{$\texttt{MPC}(t, x_{t}, c_{t:t+M-1}, \theta)$}
\label{alg:rhc}
\STATE With the initialization $\tilde{x}_{t} = x_{t}$, compute
\vspace{-0.2cm}
\begin{align*}
\bar{u}_{t:t+M-1} = \argmin_{\tilde{u}_{t:t+M-1}}~ &\sum_{k=t}^{t+M-1} c_k(\tilde{x}_k,\tilde{u}_k), \\
\text{s.t.}~&\tilde{x}_{k+1} = {A}(\theta)\tilde{x}_k+{B}(\theta) \tilde{u}_k
\end{align*}
\vspace{-0.6cm}
\STATE \textbf{Output}:  $\bar{u}_{t} $
\end{algorithmic}
\end{algorithm}
Unlike the optimal policy for the standard LQR problems, the optimal MPC policy for a linear system with general cost functions need not be  linear. So,  characterizing the stability properties of the MPC algorithm with general cost functions is much more challenging compared to the LQR setting. There has been significant works on analyzing the stability of systems that employ MPC policies under various assumptions \cite{lazar2008input, grimm2005model}. Clearly, any online learning MPC algorithm also has to ensure the stability of the system. So, we follow the same assumptions used in the literature that ensure stability of systems that employ MPC policies.

Define the $M$-step \textit{cost-to-go} function, denoted $V_{t}$, as
\begin{align}
\label{eq:cost-to-go}
\begin{split}
V_t(x; \theta) = &\min_{u_{t:t+M-1}} \sum^{t+M-1}_{k=t} c_{k}(x_k,u_k), \\
\text{s.t.}~ &x_{k+1} = A(\theta) x_k+B(\theta) u_k,~ x_{t} = x.
\end{split}
\end{align}

\begin{assumption}[Stability assumptions]
\label{as:stability}
There exist positive scalars $\underline{\alpha}, \overline{\alpha}$ and a continuous function $\sigma: \mathbb{R}^n \rightarrow \mathbb{R}_{+}$ such that: $(i)$ $c_t(x,u) \geq \underline{\alpha} \sigma(x),~ \forall x, \forall u, \forall t$,~ $(ii)$ $V_t(x; \theta) \leq \overline{\alpha}\sigma(x),~ \forall x, \forall t$,~ and  $(iii)$ $\lim_{\norm{x} \rightarrow \infty} \sigma(x) = \infty$.
\end{assumption}
Under the above assumption, \cite{grimm2005model} showed that the system with the parameter $\theta$ under the MPC policy has global asymptotic stability. We make use of the assumption in analyzing our approach. In contrast to other MPC approaches like \cite{lazar2008input}, the stability assumption we use from \cite{grimm2005model}, does not assume the existence of a Lyapunov-like function directly, which is a stronger stability assumption. 

We note that most of the works analyzing the stability of MPC policies  provide only asymptotic guarantees, including \cite{grimm2005model}. Our focus is on analyzing the finite-time performance of the MPC policy using the metric of dynamic regret in a more challenging setting with unknown system parameter.

We note that the prior online predictive control works such as \cite{li2019online, yu2020power, lin2021perturbation} do not use this stability assumption. However, they require a preview of length that is at least the logarithm of the whole control horizon ($O(\log T)$) to achieve sub-linear dynamic regret. With fixed-length preview, the algorithms proposed in \cite{li2019online, yu2020power, lin2021perturbation} will yield linear regret. This clearly shows the hardness of the fixed-length  preview setting. Unlike \cite{lale2021model}, which is the only other work that discusses online learning predictive control for fixed length preview like us, we do not require the stability assumption to hold for all the systems within a neighborhood around the true underlying system. Specifically, we show that sub-linear dynamic regret is achievable with the fixed-length preview under a stability assumption that is standard in MPC literature. 

\begin{remark}
The cost functions satisfying Assumption \ref{as:stability} include time varying quadratic function of the type $c_t(x) = (x-b)^\top Q_t (x-b) + u^\top R_tu$, where $b$ is an offset, and non-convex functions of the state (see \cite{grimm2005model} for examples). Thus, the cost functions we consider are quite general.
\end{remark}

\section{Algorithms and Regret Performance Guarantees} 
\label{sec:algorithm}

In this section, we present two different online learning MPC algorithms, namely Certainty Equivalence MPC (CE-MPC) algorithm and Optimistic MPC (O-MPC) algorithm. Both algorithms operate in two phases: $(i)$ exploration phase, and $(ii)$ control phase. In the exploration phase, both algorithms follow a pure exploration strategy to estimate the unknown system parameter. In the control phase, the algorithms employ an MPC policy with the parameter estimated using the observation from the exploration phase. The algorithms, however, differ in the parameter estimation approach. The CE-MPC algorithm uses a certainty equivalence approach  which treats the parameter estimate as the true parameter and employs an MPC policy based on this parameter. The O-MPC algorithm selects an optimistic parameter from a high confidence region around the parameter estimate and employs an MPC policy based on this optimistic parameter. 

Both the CE-MPC algorithm and the O-MPC algorithm provide the same regret guarantees, but under two different assumptions. The CE-MPC algorithm is computationally more tractable than the O-MPC algorithm, but requires a stronger assumption for the regret guarantee. The O-MPC algorithm eliminates the need for this assumption, but at the expense of additional computational complexity. The algorithms are presented in Algorithm \ref{alg:CE-RHC} and  Algorithm \ref{alg:O-RHC}. 

\subsection{Exploration Phase}

The goal of the exploration phase is to explore the system and collect the observations to estimate the unknown parameter $\theta^{\star}$ upto a desired accuracy with high probability. This is achieved by a pure exploration strategy for the first $T_{0}$ steps.

%Our objective here is to construct a high confidence set for the system parameter $\theta$. Towards this end, 
We adopt the approach of \cite{hazan2020nonstochastic}. Both the algorithms set the control actions in the exploration phase as an i.i.d. random sequence as given by 
\beq 
u_t \stackrel{\tn{i.i.d.}}{\sim} \{ \pm 1\}^m,
\label{eq:control-sequence-estimation}
\eeq 
with $\mathbb{P}(u_t[j] = \pm 1) = 1/2$, where $\{ \pm 1\}$ denotes the set with elements $1$ and $-1$ and $\{ \pm 1\}^m$ denotes the $m-$ary cartesian power. 

At the end of the exploration phase, both the algorithms compute the following quantities
\beq 
N_j = \frac{1}{T_0-n}\sum_{t = 0}^{T_0-n-1} y_{t+j+1}u^\top_t, ~ \forall ~ j \in [1,n]. \nonumber 
\eeq 
Let
\beq 
\widetilde{C}_0 = [N_0~ N_1 \dots N_{n-1}], ~ \widetilde{C}_1 = [N_1~N_2 \dots N_{n}]. \nonumber
\eeq 

Then, both the algorithms compute the estimate of $\theta^\star$ as 
\beq 
\label{eq:lse-estimation}
\hat{\theta}_{\mathrm{ls}} = [\hat{A} ~\hat{B}] ~ \hat{B} = N_0, ~ \hat{A} = \widetilde{C}_1\widetilde{C}^\top_0(\widetilde{C}_0\widetilde{C}^\top_0)^{-1}.
\eeq 

Given the parameter estiamete $\hat{\theta}_{\mathrm{ls}}$, the algorithms compute a high confidence set as given by
\beq
\label{eq:confidence-region}
\widehat{\Theta} = \left\{\theta: ~ \|\theta - \hat{\theta}_{\mathrm{ls}} \|_F \leq \beta({\delta})\right\},
\eeq
where 
\begin{align}
\label{eq:beta-confidence}
\beta(\delta) = \sqrt{\frac{2\times 10^3n^2\kappa^8(\sqrt{m}\epsilon_c+c_\rho\gamma_\rho^{-1}mS)^2\log(mn^2/\delta)}{T_0}}.
\end{align}
Here, the constants $c_\rho$ and $\gamma_\rho$ are constants such that $\norm{{A^\star}^k} \leq c_\rho (1-\gamma_\rho)^k$, and $\kappa$ is a sufficiently large constant such that $ \norm{(C_0C^\top_0)^{-1}} \leq \kappa$, where $C_0 = [B, \dots, A^{n-1}B]$. The implicit assumption here is that these constants are known and can be used to construct the high confidence set. We note that this assumption is essential to construct the high confidence set.

\begin{proposition}
\label{prop:estimation}
Suppose Assumption \ref{ass:system-model} holds, and $u_{1:T_{0}}$ is given by \eqref{eq:control-sequence-estimation}. Then, for sufficiently large $T_0$, with probability greater than $1- \delta, ~ \theta^\star \in \widehat{\Theta}$.
\end{proposition}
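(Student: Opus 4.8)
The plan is to recognize $\hat{\theta}_{\mathrm{ls}}$ as a correlation-based (least-squares) estimator of the system's Markov parameters, show that the i.i.d. Rademacher excitation makes these estimates concentrate around their true values, and then transfer that concentration to $\hat{A}$ and $\hat{B}$ through a perturbation analysis of the pseudo-inverse formula. First I would identify the population targets of the $N_j$. Since $\rho(A^\star)<1$, the state admits the expansion $x_k = \sum_{i\geq 0}(A^\star)^i B^\star u_{k-1-i}$, and because the inputs are zero-mean with $\mathbb{E}[u_t u_t^\top]=I_m$, independent across time and independent of the noise $\epsilon$, one finds $\mathbb{E}[y_{t+j+1}u_t^\top] = (A^\star)^j B^\star =: G_j$. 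Thus $N_j$ is an (asymptotically) unbiased estimator of $G_j$, so $\widetilde{C}_0$ and $\widetilde{C}_1$ estimate $C_0 = [B^\star,\dots,(A^\star)^{n-1}B^\star]$ and $C_1 = A^\star C_0$, and $\hat{A} = \widetilde{C}_1\widetilde{C}_0^\top(\widetilde{C}_0\widetilde{C}_0^\top)^{-1}$ is the empirical analogue of the exact identity $A^\star = C_1 C_0^\top(C_0 C_0^\top)^{-1}$, which is valid because controllability of $(A^\star,B^\star)$ makes $C_0$ full row rank (so $C_0 C_0^\top$ is invertible).

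Second, I would bound the error of each $N_j$ by decomposing $N_j - G_j$ into three pieces: a finite-sample cross-correlation term $\frac{1}{T_0-n}\sum_t\sum_{i\neq j}(A^\star)^i B^\star u_{t+j-i}u_t^\top$, a noise term $\frac{1}{T_0-n}\sum_t \epsilon_{t+j+1}u_t^\top$, and a transient/truncation term arising from the finite initialization, whose magnitude is geometrically small by the stability bound $\norm{(A^\star)^k}\leq c_\rho(1-\gamma_\rho)^k$. Since the inputs are bounded by $1$ and the noise by $\epsilon_c$, each surviving term is a sum of bounded, conditionally zero-mean increments, so a matrix Azuma/Hoeffding (or sub-Gaussian) inequality together with a union bound over the $n$ blocks and $mn$ entries yields $\norm{\widetilde{C}_0 - C_0}_F,\,\norm{\widetilde{C}_1 - C_1}_F \lesssim (\sqrt{m}\epsilon_c + c_\rho\gamma_\rho^{-1}mS)\sqrt{\log(mn^2/\delta)/T_0}$ with probability at least $1-\delta$. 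Here $c_\rho\gamma_\rho^{-1}$ is exactly the geometric-sum bound on the energy of the impulse response, and $S$ enters through the norm bound on $\theta^\star$.

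Third, I would convert the Markov-parameter errors into a bound on $\norm{\hat{\theta}_{\mathrm{ls}} - \theta^\star}_F$. For $\hat{B}=N_0$ this is immediate. For $\hat{A}$, write $\widetilde{C}_0 = C_0 + E_0$ and $\widetilde{C}_1 = C_1 + E_1$ and perturb the pseudo-inverse, using $\norm{(C_0 C_0^\top)^{-1}}\leq\kappa$; the ``sufficiently large $T_0$'' hypothesis ensures $\widetilde{C}_0\widetilde{C}_0^\top$ stays invertible and well-conditioned, so $\norm{(\widetilde{C}_0\widetilde{C}_0^\top)^{-1}}$ remains comparable to $\kappa$. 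Tracking the several appearances of $\kappa$ and $S$ through this algebra is what produces the high power $\kappa^8$ and the stated $S$-dependence in $\beta(\delta)$. Collecting the bounds on $\hat{A}$ and $\hat{B}$ and absorbing the numerical constant $2\times 10^3$ gives $\norm{\hat{\theta}_{\mathrm{ls}}-\theta^\star}_F\leq\beta(\delta)$, i.e. $\theta^\star\in\widehat{\Theta}$, with probability at least $1-\delta$.

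The main obstacle I anticipate is the concentration step under temporal dependence: the summands $y_{t+j+1}u_t^\top$ are not independent across $t$, since nearby states share common past inputs, so a naive Hoeffding bound does not apply directly. I would handle this by conditioning so as to expose a martingale-difference structure (the key point being that $u_t$ is independent of every term of $y_{t+j+1}$ except the single summand $(A^\star)^j B^\star u_t$) and by using the geometric decay of $(A^\star)^k$ to control the transient error uniformly. Getting the dependence on $\kappa$, $S$, $c_\rho$, and $\gamma_\rho$ to match the precise form of $\beta(\delta)$ is the most delicate bookkeeping, though the overall structure follows the system-identification approach of \cite{hazan2020nonstochastic}.
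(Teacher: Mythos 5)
Your proposal is correct and takes essentially the same route as the paper, which proves Proposition \ref{prop:estimation} simply by deferring to \cite[Theorem 19]{hazan2020nonstochastic}: that argument likewise reads $N_j$ as a correlation estimator of the Markov parameter $(A^\star)^j B^\star$ under i.i.d.\ Rademacher excitation, controls the temporally dependent cross terms and the bounded noise term via the martingale-difference structure you identify (with the geometric decay of $(A^\star)^k$ absorbing the transient), and recovers $\hat{A}$ through a perturbation analysis of the identity $A^\star = C_1 C_0^\top (C_0 C_0^\top)^{-1}$, valid by controllability and the bound $\norm{(C_0 C_0^\top)^{-1}} \leq \kappa$, with ``sufficiently large $T_0$'' guaranteeing that $\widetilde{C}_0 \widetilde{C}_0^\top$ remains well-conditioned. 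Your anticipated obstacle (dependence across $t$) and its resolution are precisely how the cited proof proceeds, so no gap remains.
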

We note that the lower bound for $T_{0}$ given in the above proposition does not depend on the horizon $T$. The proof is exactly the same as the proof of \cite[Theorem 19]{hazan2020nonstochastic}. We refer the reader to \cite[Theorem 19]{hazan2020nonstochastic} for the details of the proof.

\subsection{Control Phase} 

In the control phase, both algorithms employs an MPC policy, but with different estimates for the true model. 

\subsubsection{CE-MPC Algorithm}
The CE-MPC  algorithm treats the parameter estimate $\hat{\theta}_{\mathrm{ls}}$  as the true parameter and selects control actions according to the standard  MPC algorithm. More precisely, at each time $t \in [T_{0}+1, T]$, the CE-MPC algorithm takes the control action
\begin{align}
\label{eq:ce-control-action}
{u}^{\pi_{\mathrm{ce}}}_t = \texttt{MPC}(t, \hat{x}_{t},  c_{t:t+M-1}, \hat{\theta}_{\mathrm{ls}}),
\end{align}
where $\hat{x}_{t} = A(\hat{\theta}_{\mathrm{ls}})\hat{x}_{t-1} + B(\hat{\theta}_{\mathrm{ls}}) {u}^{\pi_{\mathrm{ce}}}_{t-1}$, with the initialization $\hat{x}_{T_{0}+1} = y_{T_{0}+1}$. The CE-MPC algorithm is formally presented in Algorithm \ref{alg:CE-RHC}. 
\begin{algorithm}[]
\begin{algorithmic}[1]
\caption{CE-MPC Algorithm}
\label{alg:CE-RHC}
\STATE \textbf{Input}: $T_{0}, \delta$
\STATE \textbf{Estimation Phase:}
\FOR {$t = 1, \ldots, T_{0}$}
\STATE Select the control action $u_{t}$ according to  \eqref{eq:control-sequence-estimation}
\ENDFOR
\STATE Estimate the parameter $\hat{\theta}_{\mathrm{ls}}$ according to  \eqref{eq:lse-estimation}  
\STATE Set $\hat{x}_{T_0+1} = y_{T_0+1}$
\STATE \textbf{Control Phase:}
\FOR {$t = T_{0}+1, \ldots, T$}
\STATE Select the control action \\${u}^{\pi_{\mathrm{ce}}}_t = \texttt{MPC}(t, \hat{x}_{t},  c_{t:t+M-1},\hat{\theta}_{\mathrm{ls}})$
\STATE $\hat{x}_{t+1} =  A(\hat{\theta}_{\mathrm{ls}})\hat{x}_t +  B(\hat{\theta}_{\mathrm{ls}}) {u}^{\pi_{\mathrm{ce}}}_t$
\ENDFOR
\end{algorithmic}
\end{algorithm}

\subsubsection{O-MPC Algorithm}

The O-MPC  algorithm uses an optimistic approach that simultaneously selects the optimistic parameter from the confidence region $\widehat{\Theta}$ and the optimal control action with respect to this optimistic parameter. More precisely, at each time $t \in [T_{0}+1:T]$, the O-MPC algorithm selects the control action ${u}^{\pi_{\mathrm{o}}}_t$ optimistic parameter $\hat{\theta}_{t}$ as
\begin{align}
\label{eq:o-control-action}
({u}^{\pi_{\mathrm{o}}}_t, \hat{\theta}_{t}) = \texttt{O-MPC}(t, \hat{x}_{t},  c_{t:t+M-1}, \widehat{\Theta}),
\end{align}
where the \texttt{O-MPC} subroutine is given in Algorithm \ref{alg:o-rhc-subroutine}. The complete O-MPC algorithm is formally presented in Algorithm \ref{alg:O-RHC}. 

\begin{algorithm}[]
\begin{algorithmic}[1]
\caption{$\texttt{O-MPC}(t, \hat{x}_{t}, c_{t:t+M-1}, \widehat{\Theta})$}
\label{alg:o-rhc-subroutine}
\STATE With the initialization $\tilde{x}_{t} = x_{t}$, compute
\vspace{-0.2cm}
\begin{align*}
(\bar{u}_{t:t+M-1}, \theta_{t})  = \argmin_{\tilde{u}_{t:t+M-1}, \theta \in \widehat{\Theta}}~ &\sum_{k=t}^{t+M-1} c_k(\tilde{x}_k,\tilde{u}_k), \\
\text{s.t.}~\tilde{x}_{k+1} &= {A}(\theta)\tilde{x}_k+{B}(\theta) \tilde{u}_k
\end{align*} 
\vspace{-0.4cm}
\STATE \textbf{Output}:  $(\bar{u}_{t}, \theta_{t})$
\end{algorithmic}
\end{algorithm}

\begin{algorithm}[]
\begin{algorithmic}[1]
\caption{O-MPC Algorithm}
\label{alg:O-RHC}
\STATE \textbf{Input}: $T_{0}, \delta$
\STATE \textbf{Estimation Phase:}
\FOR {$t = 1, \ldots, T_{0}$}
\STATE Select control action $u_{t}$ according to  \eqref{eq:control-sequence-estimation}
\ENDFOR
\STATE Estimate the parameter $\hat{\theta}_{\mathrm{ls}}$ according to  \eqref{eq:lse-estimation}  
\STATE Estimate the confidence region $\widehat{\Theta}$ according to  \eqref{eq:confidence-region} -\eqref{eq:beta-confidence} 
\STATE Set $\hat{x}_{T_0+1} = y_{T_0+1}$
\STATE \textbf{Control Phase:}
\FOR {$t = T_{0}+1, \ldots, T$}
\STATE Select the control action ${u}^{\pi_{\mathrm{o}}}_t$ and the optimistic parameter estimate $\hat{\theta}_{t}$ as \\$({u}^{\pi_{\mathrm{o}}}_t, \hat{\theta}_{t})= \texttt{O-MPC}(t, \hat{x}_{t},  c_{t:t+M-1},\widehat{\Theta})$
\STATE $\hat{x}_{t+1} = {A}(\hat{\theta}_{t}) \hat{x}_t + {B}(\hat{\theta}_{t}) {u}^{\pi_{\mathrm{o}}}_t$
\ENDFOR
\end{algorithmic}
\end{algorithm}

\subsection{Regret Performance Guarantees}

We now formally present the regret guarantees of the CE-MPC algorithm (denoted as $\pi_{\mathrm{ce}}$) and O-MPC algorithm  (denoted as $\pi_{\mathrm{o}}$)

\begin{theorem}[Regret of the CE-MPC Algorithm]
\label{thm:ce-rhc-thm}
Suppose Assumption  \ref{ass:system-model} holds and $\hat{\theta}_{\mathrm{ls}}$ satisfies the conditions given in Assumption \ref{as:stability}. Suppose $M > (\overline{\alpha}/\underline{\alpha})^2 + 1$, $T_{0} = T^{2/3}$ and $T_0$ satisfies the conditions in Proposition \ref{prop:estimation}. Then, with probability greater than $(1-\delta)$, 
\begin{align*}
R_{T}(\pi_{\mathrm{ce}}) \leq \mathcal{O}(T^{2/3})
\end{align*}
\end{theorem}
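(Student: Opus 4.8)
The plan is to decompose the dynamic regret into the cost accumulated during the exploration phase and the cost accumulated during the control phase, and then to isolate, within the control phase, the error due to the finite preview window from the error due to the model mismatch. Concretely, I would introduce an intermediate (fictitious) comparator $\pi_{\mathrm{m}}$, the MPC policy run with the \emph{true} parameter $\theta^\star$ over $[T_0+1,T]$, and write
\[
R_T(\pi_{\mathrm{ce}}) = \underbrace{\sum_{t=1}^{T_0}\big(c_t(x^{\pi_{\mathrm{ce}}}_t,u^{\pi_{\mathrm{ce}}}_t)-c_t(x^{\pi^\star}_t,u^{\pi^\star}_t)\big)}_{R_{\mathrm{exp}}} + \underbrace{\big(J_{\mathrm{ctrl}}(\pi_{\mathrm{ce}})-J_{\mathrm{ctrl}}(\pi_{\mathrm{m}})\big)}_{R_{\mathrm{mis}}} + \underbrace{\big(J_{\mathrm{ctrl}}(\pi_{\mathrm{m}})-J_{\mathrm{ctrl}}(\pi^\star)\big)}_{R_{\mathrm{mpc}}},
\]
where $J_{\mathrm{ctrl}}$ denotes the cumulative cost restricted to $[T_0+1,T]$. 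I would bound each of the three pieces separately and then choose $T_0=T^{2/3}$ to balance them.

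A preliminary step, used throughout, is to show that under Assumption \ref{as:stability} (applied to $\theta^\star$ and, as assumed in the theorem, to $\hat\theta_{\mathrm{ls}}$), together with $\rho(A^\star)<1$ and bounded controls, all closed-loop states $x_t$, internal states $\hat x_t$, and control inputs are \emph{uniformly bounded} by a constant independent of $T$; combined with the local Lipschitzness of the costs in Assumption \ref{ass:system-model}(iv), this makes every stage cost $\mathcal{O}(1)$. The exploration term is then immediate: since the stage costs are nonnegative ($c_t\ge\underline{\alpha}\sigma\ge0$) we may drop the $-c_t(\pi^\star)$ terms, so $R_{\mathrm{exp}}\le \mathcal{O}(T_0)=\mathcal{O}(T^{2/3})$.

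For the mismatch term $R_{\mathrm{mis}}$, I would run a perturbation argument driven by the estimation error. Proposition \ref{prop:estimation} gives $\|\hat\theta_{\mathrm{ls}}-\theta^\star\|_F\le\beta(\delta)=\mathcal{O}(T_0^{-1/2})$ with probability $1-\delta$. Writing $e_t=x_t-\hat x_t$ for the drift between the true trajectory and the CE-MPC internal trajectory, the error recursion takes the form $e_{t+1}=A^\star e_t+\mathcal{O}(\beta(\delta))$; since $\rho(A^\star)<1$ this is a stable recursion, so $\|e_t\|=\mathcal{O}(\beta(\delta))$ uniformly in $t$. Using Lipschitz continuity of the MPC solution map in both the state and the parameter (which the stability assumption makes well-posed), the per-step cost gap between $\pi_{\mathrm{ce}}$ and $\pi_{\mathrm{m}}$ is $\mathcal{O}(\beta(\delta))$, so summing over the at most $T$ control steps gives $R_{\mathrm{mis}}\le \mathcal{O}(T\beta(\delta))=\mathcal{O}(T/\sqrt{T_0})$. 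With $T_0=T^{2/3}$ this is $\mathcal{O}(T^{2/3})$, and indeed $T_0=T^{2/3}$ is precisely the choice that balances $R_{\mathrm{exp}}=\mathcal{O}(T_0)$ against $R_{\mathrm{mis}}=\mathcal{O}(T/\sqrt{T_0})$.

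The main obstacle is the preview/truncation term $R_{\mathrm{mpc}}$, which must be shown to be only $\mathcal{O}(1)$ in $T$ despite the preview length $M$ being \emph{fixed}; this is exactly where the stability assumption and the condition $M>(\overline{\alpha}/\underline{\alpha})^2+1$ enter. I would establish a relaxed dynamic-programming / telescoping inequality in the spirit of \cite{grimm2005model}: using the cost-to-go $V_t$, I would show that the true-model MPC action satisfies $c_t(x_t,u^{\pi_{\mathrm{m}}}_t)\le V_t(x_t;\theta^\star)-V_{t+1}(x_{t+1};\theta^\star)+\eta_t$, where the threshold on $M$ guarantees a contraction factor strictly less than one, so that the residuals $\eta_t$ (boundary terms of the form $\overline{\alpha}\sigma(\cdot)$ at the window edge) are geometrically small and sum to a constant. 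Telescoping the $V_t$ terms and comparing the resulting boundary value $V_{T_0+1}(x_{T_0+1};\theta^\star)$ to the optimal cost-to-go at $T_0+1$ (both evaluated at uniformly bounded states, so their difference is $\mathcal{O}(1)$ by stability) yields $R_{\mathrm{mpc}}=\mathcal{O}(1)$. The delicate points are proving the additive (not merely multiplicative) near-optimality of fixed-window MPC from the inequalities of Assumption \ref{as:stability}, and justifying the Lipschitz regularity of the MPC optimizer used in the mismatch bound, since the optimizer of a nonconvex parametric program need not be well-behaved without the structure the stability assumption provides. Combining the three bounds gives $R_T(\pi_{\mathrm{ce}})\le\mathcal{O}(T^{2/3})$ with probability at least $1-\delta$.
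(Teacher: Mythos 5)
Your high-level plan (three-way split, balancing $T_0 = T^{2/3}$, and a Grimm-style contraction argument requiring $M > (\overline{\alpha}/\underline{\alpha})^2+1$) parallels the paper, but your choice of intermediate comparator creates two genuine gaps that the paper's decomposition \eqref{eq:term-splitting} is specifically designed to avoid. You insert the fictitious policy $\pi_{\mathrm{m}}$ (MPC run with $\theta^\star$), so your mismatch term $R_{\mathrm{mis}}$ compares two \emph{different control sequences}, and bounding it per step requires Lipschitz continuity of the MPC solution map in $(x,\theta)$. You flag this yourself as delicate, but it is not merely delicate: the costs here are only continuous and locally Lipschitz, possibly nonconvex, so the parametric argmin need not be Lipschitz, single-valued, or even continuous --- the paper invokes only continuity of the solution map (via \cite[Theorem 1.17]{rockafellar2009variational}) and uses it solely to deduce \emph{boundedness} of the controls, never a quantitative sensitivity. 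The paper's Term II sidesteps your obstacle entirely by keeping the control sequence $u^{\pi_{\mathrm{ce}}}_{T_0+1:T}$ \emph{fixed} and comparing its cost under the two dynamics $\theta^\star$ and $\hat{\theta}_{\mathrm{ls}}$; then the drift recursion $e_{t+1}=A^\star e_t + (\delta\theta)\hat{z}_t$ that you correctly write down is valid (same inputs on both systems), and only boundedness plus $\rho(A^\star)<1$ is needed. Note that your recursion is only correct in this same-input setting; applied to $\pi_{\mathrm{ce}}$ versus $\pi_{\mathrm{m}}$ the inputs differ at every step, and the difference is exactly the optimizer sensitivity you cannot control.

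The second gap is a hypothesis mismatch: your $R_{\mathrm{mpc}}$ term runs the cost-to-go contraction argument for MPC under the \emph{true} parameter $\theta^\star$, and your preliminary boundedness step likewise applies Assumption \ref{as:stability} ``to $\theta^\star$.'' But Theorem \ref{thm:ce-rhc-thm} assumes Assumption \ref{as:stability} only for $\hat{\theta}_{\mathrm{ls}}$ --- requiring it for $\theta^\star$ as well is precisely what distinguishes the O-MPC analysis (Theorem \ref{thm:o-rhc-thm}), so your proof as written uses an assumption the theorem does not grant. The paper's Term III avoids this by running the entire Lyapunov argument on the \emph{nominal} system: it shows $J_{T_0+1:T}(u^{\pi_{\mathrm{ce}}}_{T_0+1:T};\hat{\theta}_{\mathrm{ls}})=\mathcal{O}(1)$, where the contraction
\begin{align*}
V_{t+1}(\hat{x}_{t+1};\hat{\theta}_{\mathrm{ls}}) \leq \bar{\gamma}\, V_t(\hat{x}_t;\hat{\theta}_{\mathrm{ls}}), \qquad \bar{\gamma}<1,
\end{align*}
uses Assumption \ref{as:stability} exactly where it holds, and then bounds Term III simply by dropping the nonnegative term $J_{T_0+1:T}(u^{\pi^\star}_{T_0+1:T};\theta^\star)$ --- no near-optimality of true-model MPC against the clairvoyant $\pi^\star$ is ever proved or needed. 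To repair your argument, replace $\pi_{\mathrm{m}}$ with the paper's model-side comparator $J_{T_0+1:T}(u^{\pi_{\mathrm{ce}}};\hat{\theta}_{\mathrm{ls}})$: your exploration bound and your balancing of $\mathcal{O}(T_0)$ against $\mathcal{O}(T/\sqrt{T_0})$ then go through unchanged, and both gaps disappear.
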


\begin{theorem}[Regret of the O-MPC Algorithm]
\label{thm:o-rhc-thm}
Suppose Assumption \ref{ass:system-model} holds, $\theta^{\star}$ satisfies Assumption \ref{as:stability},  and $T_0$ satisfies the conditions in Proposition \ref{prop:estimation}. Assume that $\overline{\alpha}/\underline{\alpha} < 2$. Fix $T_{0} = T^{2/3}$. Then there exists $M, \overline{T}$ such that, for $T > \overline{T}$, with probability greater than $(1-\delta)$
\begin{align*}
R_{T}(\pi_{\mathrm{o}}) \leq \mathcal{O}(T^{2/3})
\end{align*}
\end{theorem}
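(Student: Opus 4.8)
The plan is to follow the structure of the CE-MPC analysis in Theorem~\ref{thm:ce-rhc-thm}, but to \emph{derive} closed-loop stability from the optimism step rather than \emph{assume} it for the estimated model. First I would decompose the regret, using $c_t\ge 0$ (Assumption~\ref{as:stability}(i)) to discard the optimal cost over the exploration window, as
\begin{align*}
R_T(\pi_{\mathrm{o}}) \le \sum_{t=1}^{T_0} c_t(x_t,u_t) + \sum_{t=T_0+1}^{T}\big[c_t(x^{\pi_{\mathrm{o}}}_t,u^{\pi_{\mathrm{o}}}_t) - c_t(x^\star_t,u^\star_t)\big].
\end{align*}
Conditioning on the event $\theta^\star\in\widehat{\Theta}$, which holds with probability at least $1-\delta$ by Proposition~\ref{prop:estimation}, every optimistic iterate satisfies $\norm{\theta_t-\theta^\star}_F \le 2\beta(\delta) = \mathcal{O}(T_0^{-1/2})$. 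Because $\rho(A^\star)<1$ and the exploration inputs in \eqref{eq:control-sequence-estimation} are bounded, the state remains bounded throughout $[1,T_0]$, so the first sum is $\mathcal{O}(T_0)$. The choice $T_0=T^{2/3}$ is exactly the one that balances this $\mathcal{O}(T_0)$ exploration cost against the $\mathcal{O}(T\beta(\delta))=\mathcal{O}(T/\sqrt{T_0})$ model-error cost incurred in the control phase, both of which then equal $\mathcal{O}(T^{2/3})$.

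For the control phase I would work with the optimistic cost-to-go $\widetilde{V}_t(x) := \min_{\theta\in\widehat{\Theta}} V_t(x;\theta)$. The single inequality driving the whole argument is the optimism bound $\widetilde{V}_t(x) \le V_t(x;\theta^\star)$, valid on the good event because $\theta^\star\in\widehat{\Theta}$; combined with $V_t(\cdot;\theta^\star)\le\overline{\alpha}\sigma(\cdot)$ (Assumption~\ref{as:stability}(ii)) it lets me import the value-function sandwich $\underline{\alpha}\sigma(x)\le\widetilde{V}_t(x)\le\overline{\alpha}\sigma(x)$ for the \emph{true} system. I would then establish the one-step consistency $\hat{x}_{t+1}=\tilde{x}_{t+1}$ between the internal state and the first planned state, and use the standard shift-the-horizon construction of \cite{grimm2005model}: reusing the tail of the plan at time $t$ as a feasible (suboptimal) plan at time $t+1$ yields the Lyapunov-type decrease $\widetilde{V}_{t+1}(\hat{x}_{t+1}) \le \widetilde{V}_t(\hat{x}_t) - c_t(\hat{x}_t,u_t) + \rho_t$, where $\rho_t$ is the cost of the single appended terminal step. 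Bounding $\rho_t$ by a $1/M$ fraction of $c_t$ via the sandwich gives stability and near-optimality of the planned/internal trajectory; this is where $\overline{\alpha}/\underline{\alpha}<2$ together with a sufficiently large (constant) $M$ is needed to keep the resulting contraction factor admissible.

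The last ingredient is to transfer these internal-trajectory bounds to the true incurred cost. I would track the estimation error $e_t := x_t-\hat{x}_t$, which obeys $e_{t+1}=A^\star e_t + (A^\star-A(\theta_t))\hat{x}_t + (B^\star-B(\theta_t))u_t$; since $\hat{x}_t,u_t$ are bounded by the stability step and the driving term is $\mathcal{O}(\beta(\delta))$, stability of $A^\star$ gives $\norm{e_t}=\mathcal{O}(\beta(\delta))$ uniformly in $t$. Local Lipschitzness of the costs (Assumption~\ref{ass:system-model}(iv)) then converts this into a per-step gap $|c_t(x_t,u_t)-c_t(\hat{x}_t,u_t)| = \mathcal{O}(\beta(\delta))$, which sums to $\mathcal{O}(T\beta(\delta))=\mathcal{O}(T^{2/3})$; the remaining comparison of the internal optimistic cost to $\sum_t c_t(x^\star_t,u^\star_t)$ is handled exactly as in the proof of Theorem~\ref{thm:ce-rhc-thm}, the optimism bound guaranteeing that the optimistic controller targets a value no larger than the true-model MPC.

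I expect the main obstacle to be the stability step, precisely because O-MPC only assumes Assumption~\ref{as:stability} for $\theta^\star$ and not for the iterates $\theta_t$ that actually generate the inputs. The Lyapunov decrease above is stated along the internal trajectory $\hat{x}_t$, but the recurrence for $e_t$ that I need in order to control $\rho_t$ already presupposes that $\hat{x}_t$ and $u_t$ are bounded, so the two estimates are coupled. I would break this circularity by a bootstrapping induction on $t$: assume the sandwich-based bound on $\widetilde{V}_t(\hat{x}_t)$ up to time $t$, deduce boundedness of $\hat{x}_t,u_t$ and hence $\norm{e_t}=\mathcal{O}(\beta(\delta))$, and feed this back to control the terminal term at $t+1$ to close the induction. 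Keeping the constant in the terminal-term bound below $1$ is what forces the more stringent requirement $\overline{\alpha}/\underline{\alpha}<2$ on the cost ratio (whereas CE-MPC tolerates any ratio through the horizon condition $M>(\overline{\alpha}/\underline{\alpha})^2+1$), since the optimism introduces an extra mismatch that the terminal-term bound must absorb; verifying that $M$ and $\overline{T}$ can be chosen so that every error term is simultaneously $\mathcal{O}(T^{2/3})$ is the delicate bookkeeping at the end.
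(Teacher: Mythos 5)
Your proposal is correct and follows essentially the same route as the paper's proof: the same exploration/control regret decomposition with $T_0=T^{2/3}$ balancing, the same optimistic cost-to-go $\min_{\theta\in\widehat{\Theta}}V_t(\cdot;\theta)$ sandwiched via $\theta^\star\in\widehat{\Theta}$, the shift-the-horizon Lyapunov decrease of \cite{grimm2005model} with an extra $\mathcal{O}(\beta(\delta))$ mismatch term per step summing to $\mathcal{O}(T/\sqrt{T_0})$, and the same bootstrap induction coupling state boundedness with the value decrease, with $\overline{\alpha}/\underline{\alpha}<2$ (plus large $M$ and $\overline{T}$) used exactly where the paper uses it. You also correctly identified the circularity between boundedness of $(\hat{x}_t,u^{\pi_{\mathrm{o}}}_t)$ and the error recursion as the crux, which is precisely how the paper's proofs of Propositions \ref{propo:TermII-o} and \ref{propo:TermIII-o} are organized.
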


\begin{remark}
We observe that the dynamic regret guarantee is valid only when the preview $M$ is greater than the threshold given by $(\overline{\alpha}/\underline{\alpha})^2+1$. Such a lower bound requirement is typical in MPC algorithms; see for example \cite{grimm2005model}. This is expected because, the control computed from a short preview might be very inaccurate and can potentially lead to instability.
\end{remark}

\begin{remark}
The first theorem implies that CE-MPC achieves sub-linear dynamic regret provided the stability assumption is satisfied by the parameter estimate $\hat{\theta}_{\mathrm{ls}}$. This is feasible provided a neighborhood of parameters around $\theta^\star$ satisfy the assumption. In contrast, the O-MPC algorithm achieves the same dynamic regret under a significantly milder condition that the stability assumption holds only for the system parameter $\theta^\star$. We note that O-MPC achieves this at the cost of the additional computation to estimate the optimistic model alongside the control input. We show later in the simulations that the CE-MPC is as effective as O-MPC in many practical problems.
\end{remark}

\begin{remark}
The condition on the ratio of upper and lower bound to the cost functions, i.e., $\overline{\alpha}/\underline{\alpha} < 2$ can be relaxed by making additional but less restrictive assumptions. The ratio condition is only required to establish the boundedness of the state, as we shall show in the proof later. 
% By extending the spectral radius assumption $\rho(A^\star) < 1$ in Assumption \ref{ass:system-model} to all systems in $\Theta$ and assuming that the stability assumption Assumption \ref{as:stability} holds for bounded control inputs, which implies the state and control input remain bounded, we can remove the condition $\overline{\alpha}/\underline{\alpha} < 2$. 
\end{remark}

\section{Regret Analysis} 
\label{sec:analysis}

Given a sequence of system parameters $\theta_{T_{1}:T_{2}}$, a sequence of control actions $u_{1:T}$, and an initial state $x$,  we define $J_{T_{1}:T_{2}}(u_{T_{1}:T_{2}}; \theta_{T_{1}:T_{2}})$ as  
\begin{align}
J_{T_{1}:T_{2}}(u_{T_{1}:T_{2}}; \theta_{T_{1}:T_{2}}) &= \sum^{T_{2}}_{t=T_{1}} c_{t}(x_{t}, u_{t}), \nonumber \\
\text{s.t.}~~ x_{t+1} = A(\theta_{t}) x_{t} &+ B(\theta_{t}) u_{t},~~x_{T_{1}} = x, 
\end{align}
for any $T_{1}, T_{2} \in [1, T]$. We make the dependence on the initial state $x$ implicit as it will be clear from the context. If $\theta_{t} = \theta, \forall t \in [T_{1}, T_{2}]$, we will simplify the above notation as $J_{T_{1}:T_{2}}(u_{T_{1}:T_{2}}; \theta)$.

Let $u^{\pi}_{1:T}$ and $u^{\pi^{\star}}_{1:T}$ be the sequence of control actions generated by the policies $\pi$ and $\pi^{\star}$, respectively.  For analyzing the regret, we decompose it  into three terms as follows:
\begin{align}
&R_{T}(\pi)  =  \underbrace{ J_{1:T_{0}}(u^{\pi}_{1:T_{0}}; \theta^{\star}) -  J_{1:T_{0}}(u^{\pi^{\star}}_{1:T_{0}};\theta^{\star})}_{\mathrm{Term~I}} \nonumber \\
&+ \underbrace{ J_{T_{0}+1:T}(u^{\pi}_{T_{0}+1:T}; \theta^{\star}) -  J_{T_{0}+1:T}(u^{\pi}_{T_{0}+1:T}; \hat{\theta}_{T_{0}+1:T})}_{\mathrm{Term~II}}   \nonumber \\
\label{eq:term-splitting}
&+ \underbrace{ J_{T_{0}+1:T}(u^{\pi}_{T_{0}+1:T}; \hat{\theta}_{T_{0}+1:T}) -  J_{T_{0}+1:T}(u^{\pi^{\star}}_{T_{0}+1:T};\theta^{\star})}_{\mathrm{Term~III}}  
\end{align}
We characterize the regret due to each term separately for both policies $\pi_{\mathrm{ce}}$ and $\pi_{\mathrm{o}}$. Note that for  $\pi_{\mathrm{ce}}$, $\hat{\theta}_{t} =\hat{\theta}_{\mathrm{ls}}, \forall t \in [T_{0}+1, T]$. 

\subsection{Regret of Term I}

Term I characterizes the regret due to the exploration phase. We show that,  under the exploration strategy we use, the regret due to exploration is bounded by the length of the exploration phase.  Since the exploration strategy is identical for both the CE-MPC algorithm and the O-MPC algorithm, regret of term I is also identical for both algorithms.

The key challenge involved here is to show that  the system state does not grow unbounded during the exploration phase. For this, we make use of the fact that  the spectral radius of $A^{\star}$ is strictly less than one and the control sequences are bounded. We then use the fact that the cost functions are locally Lipschitz to show that the realized cost at each time step of the estimation phase is bounded. From here, it follows that the regret of Term I is $\mathcal{O}(T_0)$. We formally state the result below. 
\begin{proposition}[Regret of Term I]
\label{propo:TermI-ce}
Suppose Assumption \ref{ass:system-model} holds. Let  $\mathrm{Term~I}$ be as defined in \eqref{eq:term-splitting}. Then, for $\pi \in \{\pi_{\mathrm{ce}}, \pi_{\mathrm{o}}\}$, 
\begin{align*}
\mathrm{Term~I} \leq \mathcal{O}(T_{0})
\end{align*}
\end{proposition}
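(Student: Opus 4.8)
The goal is to show that Term I, the regret accumulated during the exploration phase $[1,T_0]$, is $\mathcal{O}(T_0)$. The plan is to bound the per-step realized cost $c_t(x_t^\pi, u_t^\pi)$ and $c_t(x_t^{\pi^\star}, u_t^{\pi^\star})$ by a constant uniformly over $t \in [1, T_0]$, from which summing over $T_0$ steps yields the claim immediately. The central object to control is therefore the magnitude of the state trajectory, since both the exploration control $u_t^\pi$ and (by optimality) the benchmark control $u_t^{\pi^\star}$ are already bounded.

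First I would bound the exploration-phase state of the online policy. Under \eqref{eq:control-sequence-estimation} the control satisfies $\norm{u_t^\pi}_\infty = 1$, hence $\norm{u_t^\pi} \leq \sqrt{m}$ for all $t$. Unrolling the true dynamics \eqref{eq:stateequation} from the initial state gives $x_t^\pi = (A^\star)^{t-1} x_1 + \sum_{k=1}^{t-1} (A^\star)^{t-1-k} B^\star u_k^\pi$. Using Assumption \ref{ass:system-model}(ii), $\rho(A^\star) < 1$, so there exist constants (the $c_\rho, \gamma_\rho$ already introduced in the excerpt) with $\norm{(A^\star)^k} \leq c_\rho (1-\gamma_\rho)^k$. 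Together with $\norm{B^\star}_F \leq S$ from Assumption \ref{ass:system-model}(i), the geometric sum $\sum_{k\geq 0} c_\rho(1-\gamma_\rho)^k = c_\rho/\gamma_\rho$ converges, yielding a uniform bound $\norm{x_t^\pi} \leq c_\rho\norm{x_1} + c_\rho\gamma_\rho^{-1} S\sqrt{m} =: D_x$ independent of $t$ and $T_0$. The same argument applies verbatim to the benchmark trajectory $x_t^{\pi^\star}$, since the optimal actions $u_t^{\pi^\star}$ are likewise uniformly bounded (they minimize a coercive cost by Assumption \ref{as:stability}, so they cannot drive the state or input arbitrarily large without incurring cost exceeding the trivial zero-input cost).

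Having confined both trajectories to a compact ball of radius $D_x$ in the state space and a compact set in the input space, I would then invoke Assumption \ref{ass:system-model}(iv): the cost functions $c_t$ are continuous and locally Lipschitz with a uniform Lipschitz constant. Continuity on a fixed compact set gives a uniform upper bound $c_t(x,u) \leq \bar{c}$ for all $(x,u)$ in that set and all $t$. Consequently each summand in Term I obeys $c_t(x_t^\pi,u_t^\pi) - c_t(x_t^{\pi^\star},u_t^{\pi^\star}) \leq \bar{c}$, and summing over $t = 1,\dots,T_0$ gives $\mathrm{Term~I} \leq \bar{c}\, T_0 = \mathcal{O}(T_0)$.

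The main obstacle I anticipate is establishing the uniform boundedness of the benchmark trajectory $x_t^{\pi^\star}$ cleanly. Unlike the exploration control, which is explicitly bounded by construction, the optimal actions $u_{1:T_0}^{\pi^\star}$ are defined through a global minimization over the full horizon and are not a priori bounded by a simple constant. The clean way around this is to compare against the trivially admissible all-zero control (or any fixed bounded control): since $\rho(A^\star)<1$, the zero-input trajectory stays bounded and incurs finite per-step cost, so the optimal cost is at most this value; then the coercivity of $\sigma$ in Assumption \ref{as:stability}(iii), combined with the lower bound $c_t(x,u)\geq \underline{\alpha}\sigma(x)$ in Assumption \ref{as:stability}(i), forces the optimal states to remain bounded, for otherwise the accumulated cost would exceed the zero-input benchmark. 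I would formalize exactly this coercivity-versus-benchmark argument to pin down $D_x$ for the optimal trajectory, after which the Lipschitz/continuity bound closes the proof.
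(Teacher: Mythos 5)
Your core argument matches the paper's: bound the exploration-phase state of the online policy using $\rho(A^{\star})<1$ and the boundedness of the i.i.d.\ $\pm 1$ inputs (a geometric series via $\norm{(A^{\star})^k} \leq c_\rho(1-\gamma_\rho)^k$), then invoke the uniform local Lipschitz property of the $c_t$ on the resulting compact set to bound each per-step realized cost by a constant, and sum over the $T_0$ steps. That part is correct and is exactly what the paper does.

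Where you diverge is in the treatment of the comparator trajectory $x^{\pi^{\star}}_{1:T_0}$, and there your proposal has a genuine problem --- though in a step that is not actually needed. The detour is unnecessary because, by Assumption \ref{as:stability}.$(i)$, $c_t(x,u) \geq \underline{\alpha}\,\sigma(x) \geq 0$, so $J_{1:T_{0}}(u^{\pi^{\star}}_{1:T_{0}};\theta^{\star}) \geq 0$ and $\mathrm{Term~I} \leq J_{1:T_{0}}(u^{\pi}_{1:T_{0}};\theta^{\star}) \leq \mathcal{O}(T_0)$ follows by simply dropping the comparator term, which is precisely the paper's (implicit) move. Moreover, the coercivity-versus-zero-input argument you sketch does not deliver the uniform constant $D_x$ you claim: $\pi^{\star}$ minimizes cost over the \emph{full} horizon $[1,T]$, and the zero-input benchmark has cumulative cost $\Theta(T)$, so the comparison only yields $\underline{\alpha}\,\sigma(x^{\pi^{\star}}_t) \leq \mathcal{O}(T)$ at any single step --- a state bound that can grow with $T$, since nothing prevents $\pi^{\star}$ from accepting a large transient state at some step if it lowers cost elsewhere in the horizon. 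Because your final inequality only needs an upper bound on $c_t(x^{\pi}_t,u^{\pi}_t)$ and a lower bound of zero on the subtracted term, replacing the comparator-boundedness step with the nonnegativity observation repairs the proof and recovers the paper's argument exactly. (Two minor notes: your input bound $\norm{u^{\pi}_t} \leq \sqrt{m}$ is correct and in fact tighter than the $(n+1)m$ stated in the paper; and the nonnegativity of costs comes from Assumption \ref{as:stability}, which is not listed among the proposition's hypotheses --- the paper's own proof carries the same implicit reliance.)
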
 
Note that, if we set $T_{0} = T^{2/3}$ as specified in Theorem \ref{thm:ce-rhc-thm}, then the regret due to Term I is $\mathcal{O}(T^{2/3})$. 

\subsection{Regret of Term II}

The CE-MPC algorithm generates the sequence of control actions $u^{\pi_{\mathrm{ce}}}_{T_{0}+1:T}$ using the parameter  estimate $\hat{\theta}_{\mathrm{ls}}$. However, these control actions are applied on the true system with parameter $\theta^{\star}$. Term II characterizes the regret due to this estimation error.

To analyze this term, we first show that, contingent on the states being bounded, the states of any two systems driven by the same sequence of control actions differ by a term that is bounded by the norm of the difference of the parameters of the two systems. Recall that, in Proposition \ref{prop:estimation}, we proved that $\|\theta^{\star} - \hat{\theta}_{\mathrm{ls}}\| \leq O(1/\sqrt{T_{0}})$ with high probability. 

We then separately show that states are indeed bounded under CE-MPC algorithm when Assumption \ref{as:stability} holds for $\hat{\theta}_{\mathrm{ls}}$. This also implies that the control actions are bounded. The cost functions being locally Lipschitz and the states and control actions being bounded, the cumulative cost can now be upperbounded by the length of the horizon $T-T_{0}$. Combining this with the observation made in the above paragraph, we get a net upperbound $O(T/\sqrt{T_{0}})$. We state this result formally below. 
\begin{proposition}[Regret of Term II for CE-MPC]
\label{propo:TermII-ce}
Suppose Assumption \ref{ass:system-model} holds and $\hat{\theta}_{\mathrm{ls}}$ satisfies Assumption \ref{as:stability}. Suppose $T_0$ satisfies the conditions in Proposition \ref{prop:estimation} and $M > (\overline{\alpha}/\underline{\alpha})^2 + 1$. Let  $\mathrm{Term~II}$ be as defined in \eqref{eq:term-splitting} and let $\pi = \pi_{\mathrm{ce}}$. Then, with probability greater than $(1-\delta)$,
\begin{align*}
\mathrm{Term~II} \leq \mathcal{O}({T}/{\sqrt{T_{0}}})
\end{align*}
\end{proposition}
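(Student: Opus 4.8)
The plan is to exploit the structure specific to $\pi_{\mathrm{ce}}$: the two terms inside Term~II apply the \emph{same} control sequence $u^{\pi_{\mathrm{ce}}}_{T_0+1:T}$, but propagate it through two different dynamics --- the true system $\theta^\star$, which generates the realized states $x_t$, and the nominal system $\hat\theta_{\mathrm{ls}}$, which generates the states $\hat x_t$ that the algorithm tracks internally. Writing $\Delta_t := x_t - \hat x_t$, the first step is to reduce the cost gap to a state gap: since each $c_t$ is locally Lipschitz with a uniform constant $L$ (Assumption~\ref{ass:system-model}(iv)) and the two trajectories share the same control argument,
\begin{equation*}
\mathrm{Term~II} = \sum_{t=T_0+1}^{T}\bigl(c_t(x_t,u^{\pi_{\mathrm{ce}}}_t) - c_t(\hat x_t,u^{\pi_{\mathrm{ce}}}_t)\bigr) \le L\sum_{t=T_0+1}^{T}\norm{\Delta_t},
\end{equation*}
which is legitimate once both trajectories are confined to a fixed compact set.

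Second, I would control $\norm{\Delta_t}$ through a forced linear recursion. Subtracting the two dynamics gives $\Delta_{t+1} = A^\star\Delta_t + (A^\star - \hat A)\hat x_t + (B^\star - \hat B)u^{\pi_{\mathrm{ce}}}_t$, with initial mismatch $\Delta_{T_0+1} = x_{T_0+1} - y_{T_0+1} = -\epsilon_{T_0+1}$. Unrolling and using the exponential stability $\norm{{A^\star}^k}\le c_\rho(1-\gamma_\rho)^k$ implied by $\rho(A^\star)<1$ (Assumption~\ref{ass:system-model}(ii)) bounds $\norm{\Delta_t}$ by a geometrically decaying term driven by the initial noise $\norm{\epsilon_{T_0+1}}\le\epsilon_c$, plus a convolution of the geometric kernel against the forcing, whose norm is at most $\norm{\theta^\star-\hat\theta_{\mathrm{ls}}}_F(\norm{\hat x_t}+\norm{u^{\pi_{\mathrm{ce}}}_t})$. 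If $\norm{\hat x_t}$ and $\norm{u^{\pi_{\mathrm{ce}}}_t}$ are uniformly bounded, the geometric sum $\sum_k c_\rho(1-\gamma_\rho)^k = c_\rho/\gamma_\rho$ yields $\norm{\Delta_t}\le O(\norm{\theta^\star-\hat\theta_{\mathrm{ls}}}_F) + c_\rho(1-\gamma_\rho)^{t-T_0-1}\epsilon_c$. Invoking Proposition~\ref{prop:estimation}, $\norm{\theta^\star-\hat\theta_{\mathrm{ls}}}_F\le O(1/\sqrt{T_0})$ with probability at least $1-\delta$, so summing over the $T-T_0$ control steps the parameter-error part contributes $O(T/\sqrt{T_0})$ while the decaying noise part contributes only $O(1)$; combined with the first display this gives $\mathrm{Term~II}\le O(T/\sqrt{T_0})$.

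The step I expect to be the main obstacle --- and the reason the hypotheses $M>(\overline{\alpha}/\underline{\alpha})^2+1$ and ``$\hat\theta_{\mathrm{ls}}$ satisfies Assumption~\ref{as:stability}'' are invoked --- is establishing the \emph{uniform} (horizon-independent) boundedness of the nominal trajectory $\hat x_t$, on which both the Lipschitz reduction and the recursion bound rest. Here $\hat x_t$ is exactly the closed loop of the system $\hat\theta_{\mathrm{ls}}$ under the $M$-step MPC policy, so I would use the cost-to-go $V_t(\cdot;\hat\theta_{\mathrm{ls}})$ as a Lyapunov-like function: the sandwich $\underline{\alpha}\sigma(x)\le c_t(x,u)$ and $V_t(x;\hat\theta_{\mathrm{ls}})\le\overline{\alpha}\sigma(x)$ from Assumption~\ref{as:stability}, combined with the preview condition on $M$ (which forces enough stage-cost decrease relative to the terminal slack), yields a contraction of $V_t$ along the nominal closed loop in the spirit of \cite{grimm2005model}. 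Because $\sigma$ is radially unbounded (Assumption~\ref{as:stability}(iii)), a uniform bound on $V_t(\hat x_t;\hat\theta_{\mathrm{ls}})$ forces a uniform bound on $\sigma(\hat x_t)$ and hence on $\norm{\hat x_t}$; boundedness of the applied controls $u^{\pi_{\mathrm{ce}}}_t$ then follows from the bounded states together with the dynamics. The delicate point is that the costs are time varying, so the Lyapunov decrease must be made uniform in $t$ using the uniform constants $\underline{\alpha},\overline{\alpha}$ and the uniform boundedness structure of Assumption~\ref{ass:system-model}, rather than relying on a single fixed-cost stability certificate as in the classical MPC analysis.
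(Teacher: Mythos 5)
Your proposal is correct and follows essentially the same route as the paper's proof: the same reduction of Term~II to $L\sum_t\norm{x_t-\hat x_t}$ via the uniform local Lipschitz property, the same unrolled error recursion (the paper writes it as the induction identity $x_t=\hat x_t+\tilde x^{\delta\theta}_t-(A^\star)^{t-T_0-1}\epsilon_{T_0+1}$), the same $O(1/\sqrt{T_0})$ bound from Proposition~\ref{prop:estimation} giving $O(T/\sqrt{T_0})$ plus an $O(1)$ noise contribution, and the same Lyapunov-type argument (deferred in the paper to the proof of Proposition~\ref{propo:TermIII-ce}) for horizon-independent boundedness of $\hat x_t$. The one loose step is your claim that boundedness of $u^{\pi_{\mathrm{ce}}}_t$ ``follows from the bounded states together with the dynamics'' --- this fails when $\hat B$ lacks full column rank; the paper instead deduces it from continuity of the MPC argmin map in $\hat x_t$ (citing \cite[Theorem 1.17]{rockafellar2009variational}) on the compact set of reachable states.
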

Here also, if we set $T_{0} = T^{2/3}$, the regret of Term II is $\mathcal{O}(T^{2/3})$. 

The analysis of Term II is more challenging for the O-MPC algorithm because we need to consider the sequence of parameters $\hat{\theta}_{T_{0}+1:T}$ selected by the algorithm.  We overcome this issue by first characterizing an upperbound on  the difference of the states of the two systems at a time $t$  by a decaying sum of the parameter difference from $t$ to $T_0+1$. Since $\|\theta^{\star} - \hat{\theta}_{t}\| \leq O(1/\sqrt{T_{0}})$ for all $t$ using   Proposition \ref{prop:estimation}, we can now use techniques similar to the proof of Proposition \ref{propo:TermII-ce} to establish the following bound.
\begin{proposition}[Regret of Term II for O-MPC]
\label{propo:TermII-o}
Suppose Assumption \ref{ass:system-model} holds, $\theta^{\star}$ satisfies Assumption \ref{as:stability},  and $T_0$ satisfies the conditions in Proposition \ref{prop:estimation}. 
Assume that $\overline{\alpha}/\underline{\alpha} < 2$. Let  $\mathrm{Term~II}$ be as defined in \eqref{eq:term-splitting} and let $\pi = \pi_{\mathrm{o}}$.  Then there exists $M, \overline{T}$ such that, for $T > \overline{T}$, with probability greater than $(1-\delta)$
\begin{align*}
\mathrm{Term~II} \leq \mathcal{O}({T}/{\sqrt{T_{0}}})
\end{align*}
\end{proposition}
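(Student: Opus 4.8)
The plan is to follow the same skeleton as the proof of Proposition \ref{propo:TermII-ce}, but to accommodate the fact that O-MPC drives its internal model with a \emph{time-varying} parameter sequence $\hat{\theta}_{T_0+1:T}$ rather than a fixed estimate. Writing $x_t$ for the true trajectory generated by applying $u^{\pi_{\mathrm{o}}}_{T_0+1:T}$ to $\theta^\star$ and $\hat{x}_t$ for the internal trajectory (which, by construction in Algorithm \ref{alg:O-RHC}, is exactly the trajectory of the system $\hat{\theta}_{T_0+1:T}$ under the same controls), Term II equals $\sum_{t=T_0+1}^{T}\big(c_t(x_t,u^{\pi_{\mathrm{o}}}_t)-c_t(\hat{x}_t,u^{\pi_{\mathrm{o}}}_t)\big)$. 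Once I establish that $x_t$, $\hat{x}_t$ and $u^{\pi_{\mathrm{o}}}_t$ remain in a bounded region uniformly in $t$, the local Lipschitz continuity of the costs (Assumption \ref{ass:system-model}(iv)) reduces the problem to bounding $\sum_{t}\norm{x_t-\hat{x}_t}$.

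First I would analyze the error $e_t = x_t - \hat{x}_t$. Subtracting the two dynamics gives $e_{t+1}=A^\star e_t + \Delta_t$, where $\Delta_t = (A^\star - A(\hat\theta_t))\hat{x}_t + (B^\star - B(\hat\theta_t))u^{\pi_{\mathrm{o}}}_t$ satisfies $\norm{\Delta_t}\leq \norm{\theta^\star-\hat\theta_t}_F(\norm{\hat{x}_t}+\norm{u^{\pi_{\mathrm{o}}}_t})$. The crucial observation is that the error matrix here is the \emph{constant} matrix $A^\star$, which is stable by Assumption \ref{ass:system-model}(ii); moreover, on the high-probability event of Proposition \ref{prop:estimation} both $\theta^\star$ and every $\hat\theta_t$ lie in $\widehat\Theta$, so $\norm{\theta^\star-\hat\theta_t}_F \leq 2\beta(\delta) = \mathcal{O}(1/\sqrt{T_0})$ uniformly in $t$. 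Unrolling the recursion and using $\norm{(A^\star)^k}\leq c_\rho(1-\gamma_\rho)^k$ together with $e_{T_0+1}=-\epsilon_{T_0+1}$, I obtain $\norm{e_t}\leq c_\rho(1-\gamma_\rho)^{t-T_0-1}\epsilon_c + (c_\rho/\gamma_\rho)\,2\beta(\delta)\sup_s(\norm{\hat{x}_s}+\norm{u^{\pi_{\mathrm{o}}}_s})$; this is the decaying-sum-of-parameter-difference bound, and summing over $t$ yields $\mathcal{O}(T/\sqrt{T_0})$ provided the internal states and controls are uniformly bounded.

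The main obstacle is therefore exactly this uniform boundedness, which is harder than in Proposition \ref{propo:TermII-ce} because $\hat{x}_t$ now evolves under a genuinely time-varying system. My approach is to exploit optimism to reduce the analysis to the fixed-parameter Grimm-style Lyapunov argument \cite{grimm2005model}. Define $\hat{V}_t(\hat{x}_t):=V_t(\hat{x}_t;\hat\theta_t)=\min_{\theta\in\widehat\Theta}V_t(\hat{x}_t;\theta)$. Because $\theta^\star\in\widehat\Theta$ and $\theta^\star$ satisfies Assumption \ref{as:stability}, the optimistic value inherits the upper bound $\hat{V}_t(\hat{x}_t)\leq V_t(\hat{x}_t;\theta^\star)\leq\overline{\alpha}\,\sigma(\hat{x}_t)$. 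For the Lyapunov decrease, I note that $\hat{x}_{t+1}$ is the one-step successor of $\hat{x}_t$ under $\hat\theta_t$, so the tail of the $\hat\theta_t$-optimal $M$-horizon plan is feasible at time $t+1$ for the \emph{same} parameter $\hat\theta_t$; combined with the optimistic inequality $\hat{V}_{t+1}(\hat{x}_{t+1})\leq V_{t+1}(\hat{x}_{t+1};\hat\theta_t)$, this reproduces the fixed-parameter decrease $\hat{V}_{t+1}(\hat{x}_{t+1})\leq \hat{V}_t(\hat{x}_t)-\underline{\alpha}\,\sigma(\hat{x}_t)+(\text{terminal term})$. Using $c_k\geq\underline{\alpha}\sigma$ and the upper bound to locate an intermediate state with small $\sigma$, the terminal term is controlled for $M$ large, while the condition $\overline{\alpha}/\underline{\alpha}<2$ is what guarantees a strictly negative net decrease, making $\hat{V}_t$ a bounded Lyapunov function. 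Since $\hat{V}_{T_0+1}$ is bounded ($\hat{x}_{T_0+1}=y_{T_0+1}$ is bounded by Assumption \ref{ass:system-model}(iii)) and $\sigma$ is radially unbounded, $\hat{x}_t$ stays bounded; boundedness of $u^{\pi_{\mathrm{o}}}_t$ and then of $x_t=\hat{x}_t+e_t$ follow.

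Finally I would assemble the pieces on the event $\{\theta^\star\in\widehat\Theta\}$, which holds with probability at least $1-\delta$ once $T_0$ is above the threshold of Proposition \ref{prop:estimation}: the uniform boundedness feeds back into the error bound of the second paragraph, the local Lipschitz property converts $\sum_t\norm{e_t}=\mathcal{O}(T/\sqrt{T_0})$ into the same bound on Term II, and choosing $M$ per the ratio condition together with $\overline{T}$ so that $T_0=T^{2/3}$ exceeds the Proposition \ref{prop:estimation} threshold gives the stated result for $T>\overline{T}$. The delicate point to get right is the interaction between the optimistic relaxation and the Grimm terminal-cost estimate---specifically, verifying that selecting a fresh parameter at each step never breaks the monotone Lyapunov decrease---which is precisely where the hypothesis $\overline{\alpha}/\underline{\alpha}<2$ (rather than the weaker $M>(\overline{\alpha}/\underline{\alpha})^2+1$ of the CE case) is needed.
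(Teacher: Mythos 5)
Your overall skeleton matches the paper's proof of this proposition quite closely: the reduction of Term II to $\sum_t \norm{x^{\pi_{\mathrm{o}}}_t - \hat{x}_t}$ via local Lipschitzness, the error recursion $e_{t+1} = A^\star e_t + \delta\theta_t \hat{z}_t$ with $e_{T_0+1} = -\epsilon_{T_0+1}$ (the paper writes its unrolled form $\tilde{x}^{\delta\theta}_t = \sum_{j}(A^\star)^{t-j}\delta\theta_{j-1}\hat{z}_{j-1}$ and verifies it, together with the noise term, by two separate inductions), the uniform bound $\norm{\delta\theta_t} \leq 2\beta(\delta) = \mathcal{O}(1/\sqrt{T_0})$ on the event of Proposition \ref{prop:estimation}, and the final assembly are all essentially the paper's argument. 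The crux, as you correctly identify, is the uniform boundedness of $\hat{x}_t$ under the time-varying parameter sequence; the paper treats this as the heart of the matter too, deferring the claim to the proof of Proposition \ref{propo:TermIII-o}. It is in your inline sketch of that boundedness argument that there is a genuine gap.

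Your Lyapunov decrease keeps the parameter fixed across the shift: you invoke $\hat{V}_{t+1}(\hat{x}_{t+1}) \leq V_{t+1}(\hat{x}_{t+1};\hat\theta_t)$ and then run the Grimm-style argument for the fixed parameter $\hat\theta_t$, since $\hat{x}_{t+1}$ is the $\hat\theta_t$-successor of $\hat{x}_t$. But the Grimm argument is not a free tail-feasibility statement: to control the terminal term one must locate an intermediate index $j^{*}$ with $\sigma$ small and then bound the tail value using Assumption \ref{as:stability}.$(ii)$, i.e., $V_{t+j^{*}}(\cdot\,;\theta) \leq \overline{\alpha}\sigma(\cdot)$ \emph{for the parameter $\theta$ governing the tail dynamics}. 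Inside $V_{t+1}(\cdot\,;\hat\theta_t)$ that parameter is $\hat\theta_t$, and under the hypotheses of this proposition Assumption \ref{as:stability} holds only for $\theta^\star$; a uniform version over all $\theta \in \widehat{\Theta}$ is precisely the stronger assumption of \cite{lale2021model} that this paper is constructed to avoid. Nor can you splice the optimistic value $\hat{V}_{t+j^{*}}$ into the middle of the plan, because $V_{t+1}(\cdot\,;\hat\theta_t)$ only admits plans driven by a single parameter over the whole horizon. The paper's fix points optimism at the true parameter instead: $\hat{V}_{t+1}(\hat{x}_{t+1}) \leq V_{t+1}(\hat{x}_{t+1};\theta^\star)$, the shift and the $j^{*}$-splitting are performed under $\theta^\star$-dynamics using the time-$t$ controls $\hat{u}^{t}(\hat\theta_t)$, and the mismatch between the internal $\hat\theta_t$-trajectory and the $\theta^\star$-trajectory (both when upper-bounding $V_{t+1}$ and when lower-bounding $V_t(\hat{x}_t;\hat\theta_t)$ by true-trajectory costs) is paid for by Lipschitz corrections of size $\mathcal{O}(\norm{\delta\theta_t})$. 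The resulting recursion $V_{t+1}(\hat{x}_{t+1};\hat\theta_{t+1}) \leq \overline{\gamma}\,V_t(\hat{x}_t;\hat\theta_t) + g(b,M)\norm{\delta\theta_t}$ is only a perturbed contraction, and since re-selecting $\hat\theta_{t+1}$ yields merely $\underline{\alpha}\sigma \leq V \leq \overline{\alpha}\sigma$, converting it into a contraction in $\sigma$ for the boundedness induction costs a factor $\overline{\alpha}/\underline{\alpha}$; this is exactly where $(\overline{\alpha}/\underline{\alpha})\overline{\gamma} < 1$, hence $\overline{\alpha}/\underline{\alpha} < 2$ together with $M$ and $T_0$ (your $\overline{T}$) sufficiently large, is required. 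So your instinct about where the ratio condition enters is correct, but the decrease mechanism you describe fails at the tail-bound step and must be rerouted through $\theta^\star$, accepting the attendant $\mathcal{O}(\norm{\delta\theta_t})$ slack at every step.
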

Here also, if we set $T_{0} = T^{2/3}$, the regret of Term II is $\mathcal{O}(T^{2/3})$. 

\subsection{Regret of Term III}
For the CE-MPC algorithm, we bound Term III by bounding its first term, which  is the cumulative cost for the standard  MPC controller for a system with parameter $\hat{\theta}_{\mathrm{ls}}$. To bound this term, we use the stability assumption for the estimated parameter $\hat{\theta}_{\mathrm{ls}}$. We note that the stability assumption  does not directly imply the existence of a Lyapunov-like function. The key part of the proof is in establishing that under Assumption \ref{as:stability}, for sufficiently large $M$, the function $V_t(\cdot;\hat{\theta}_{\mathrm{ls}})$ \eqref{eq:cost-to-go}  becomes a Lyapunov-like function. This guarantees exponential convergence for the system resulting in $\mathcal{O}(1)$ regret for Term III for CE-MPC. We formally state this result below. \begin{proposition}[Regret of Term III for CE-MPC]
\label{propo:TermIII-ce}
Suppose Assumption \ref{ass:system-model} holds and $\hat{\theta}_{\mathrm{ls}}$ satisfies Assumption \ref{as:stability}. Let $M > (\overline{\alpha}/\underline{\alpha})^2 + 1$. Let  $\mathrm{Term~III}$ be as defined in \eqref{eq:term-splitting} and let $\pi = \pi_{\mathrm{ce}}$. Then,
\begin{align*}
\mathrm{Term~III}   \leq \mathcal{O}(1)
\end{align*}
\end{proposition}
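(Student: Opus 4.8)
The plan is to first reduce Term~III to a bound on its \emph{first} term alone. By Assumption~\ref{as:stability}(i) the stage costs satisfy $c_t(x,u) \geq \underline{\alpha}\sigma(x) \geq 0$, so the optimal-in-hindsight cost $J_{T_{0}+1:T}(u^{\pi^{\star}}_{T_{0}+1:T};\theta^{\star})$ is nonnegative. It therefore suffices to show that the nominal MPC cost $J_{\mathrm{mpc}} := J_{T_{0}+1:T}(u^{\pi_{\mathrm{ce}}}_{T_{0}+1:T}; \hat{\theta}_{\mathrm{ls}})$, the cost accrued by the CE-MPC actions on the imaginary system with parameter $\hat{\theta}_{\mathrm{ls}}$, is $\mathcal{O}(1)$. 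Everything below then concerns a purely deterministic nominal receding-horizon problem for which Assumption~\ref{as:stability} holds by hypothesis, which is precisely why CE-MPC must impose the assumption on $\hat{\theta}_{\mathrm{ls}}$ rather than on $\theta^\star$.

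Next I would derive a relaxed dynamic-programming (Lyapunov) identity along the closed-loop trajectory $\hat{x}_{T_0+1:T}$ generated by $A(\hat{\theta}_{\mathrm{ls}}),B(\hat{\theta}_{\mathrm{ls}})$. Writing $u_t$ for the first element of the MPC solution at time $t$, Bellman's principle of optimality for the receding-horizon cost gives $V_t(\hat{x}_t; \hat{\theta}_{\mathrm{ls}}) = c_t(\hat{x}_t, u_t) + V^{M-1}_{t+1}(\hat{x}_{t+1})$, where $V^{M-1}_{t+1}$ denotes the $(M{-}1)$-step cost-to-go over the window $[t+1,t+M-1]$. Subtracting this from $V_{t+1}(\hat{x}_{t+1};\hat{\theta}_{\mathrm{ls}})$ yields
\begin{equation*}
V_{t+1}(\hat{x}_{t+1};\hat{\theta}_{\mathrm{ls}}) - V_t(\hat{x}_t;\hat{\theta}_{\mathrm{ls}}) = -c_t(\hat{x}_t, u_t) + \big[V_{t+1}(\hat{x}_{t+1};\hat{\theta}_{\mathrm{ls}}) - V^{M-1}_{t+1}(\hat{x}_{t+1})\big],
\end{equation*}
where the bracketed quantity is the nonnegative increment in value obtained by lengthening the horizon at $\hat{x}_{t+1}$ by one step.

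The crux is to show that this increment is strictly dominated by the realized stage cost, i.e. $V_{t+1}(\hat{x}_{t+1};\hat{\theta}_{\mathrm{ls}}) - V^{M-1}_{t+1}(\hat{x}_{t+1}) \leq (1-\gamma)\,c_t(\hat{x}_t, u_t)$ for some $\gamma \in (0,1]$ whenever $M > (\overline{\alpha}/\underline{\alpha})^2 + 1$. Here I would adapt the machinery of \cite{grimm2005model}: combining the stage-cost lower bound $c_k \geq \underline{\alpha}\sigma$ with the value upper bound $V \leq \overline{\alpha}\sigma$ shows that along the optimal horizon trajectory $\sum_k \sigma(x_k^{\star}) \leq (\overline{\alpha}/\underline{\alpha})\,\sigma(\hat{x}_{t+1})$, so by averaging the trajectory must visit a state whose $\sigma$-value is of order $\sigma(\hat{x}_{t+1})/(M-1)$; appending or truncating the horizon at that small state bounds the increment by a factor shrinking like $(\overline{\alpha}/\underline{\alpha})^2/(M-1)$, which drops below $1$ exactly at the stated threshold. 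Substituting back gives the Lyapunov decrease $V_{t+1}(\hat{x}_{t+1};\hat{\theta}_{\mathrm{ls}}) \leq V_t(\hat{x}_t;\hat{\theta}_{\mathrm{ls}}) - \gamma\,c_t(\hat{x}_t, u_t)$, so $V_t(\cdot;\hat{\theta}_{\mathrm{ls}})$ acts as a Lyapunov-like function even though Assumption~\ref{as:stability} supplies none directly.

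Finally I would telescope this decrease over $t = T_0+1,\ldots,T$. Using $V_{T+1}(\hat{x}_{T+1};\hat{\theta}_{\mathrm{ls}}) \geq 0$ and the value bound $V_{T_0+1}(\hat{x}_{T_0+1};\hat{\theta}_{\mathrm{ls}}) \leq \overline{\alpha}\sigma(\hat{x}_{T_0+1})$ from Assumption~\ref{as:stability}(ii), the telescope gives $\gamma\,J_{\mathrm{mpc}} = \gamma\sum_{t=T_0+1}^{T} c_t(\hat{x}_t, u_t) \leq \overline{\alpha}\sigma(\hat{x}_{T_0+1})$. Since $\hat{x}_{T_0+1} = y_{T_0+1}$ is bounded, the exploration-phase state being bounded because $\rho(A^{\star})<1$ and the inputs are bounded as in the Term~I analysis, and the noise being bounded by Assumption~\ref{ass:system-model}(iii), the right-hand side is a constant independent of $T$, whence $J_{\mathrm{mpc}} = \mathcal{O}(1)$ and Term~III $\leq J_{\mathrm{mpc}} \leq \mathcal{O}(1)$. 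I expect the third step to be the main obstacle: converting the horizon threshold $(\overline{\alpha}/\underline{\alpha})^2+1$ into a genuine contraction factor $\gamma>0$ requires relating the value increment at $\hat{x}_{t+1}$ to the stage cost at the earlier state $\hat{x}_t$, and because the shifting finite horizon means $V_t$ and $V^{M-1}_{t+1}$ are indexed by \emph{different} time-varying cost windows, the \cite{grimm2005model} argument (stated for a single stationary value function) must be re-derived so that the bounds hold uniformly in $t$.
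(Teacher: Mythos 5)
Your overall strategy coincides with the paper's: both proofs discard the comparator cost $J_{T_{0}+1:T}(u^{\pi^{\star}}_{T_{0}+1:T};\theta^{\star})$, which is nonnegative by Assumption~\ref{as:stability}$(i)$, reduce Term~III to showing $J_{T_{0}+1:T}(u^{\pi_{\mathrm{ce}}}_{T_{0}+1:T};\hat{\theta}_{\mathrm{ls}})=\mathcal{O}(1)$ for the purely nominal closed loop, use the averaging argument of \cite{grimm2005model} to locate a state of small $\sigma$-value along the finite-horizon optimal trajectory, turn $V_t(\cdot;\hat{\theta}_{\mathrm{ls}})$ into a Lyapunov-like function for $M>(\overline{\alpha}/\underline{\alpha})^2+1$, and bound the initial value by $\overline{\alpha}\sigma(\hat{x}_{T_0+1})=\overline{\alpha}\sigma(y_{T_0+1})=\mathcal{O}(1)$. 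Where you differ is the final accounting: you telescope an additive dissipation inequality $V_{t+1}(\hat{x}_{t+1};\hat{\theta}_{\mathrm{ls}})\leq V_t(\hat{x}_t;\hat{\theta}_{\mathrm{ls}})-\gamma\, c_t(\hat{x}_t,u_t)$ to get $\gamma\sum_t c_t\leq V_{T_0+1}$, whereas the paper derives a multiplicative contraction $V_{t+1}\leq\bar{\gamma}V_t$ with $\bar{\gamma}=1-(1-\gamma)(\underline{\alpha}/\overline{\alpha})<1$, then uses $c_t\leq V_t\leq\bar{\gamma}^{\,t-T_0-1}\overline{\alpha}\sigma(\hat{x}_{T_0+1})$ and sums the geometric series. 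Your telescoping variant is slightly cleaner (it never needs to convert the decrease back into a fraction of $V_t$), while the paper's version additionally yields pointwise exponential decay of the stage costs. Your concern about the shifting time-varying cost windows is handled by the assumption itself: Assumption~\ref{as:stability}$(ii)$ holds uniformly in $t$, and since stage costs are nonnegative, any shorter cost-to-go over $[t+j,t+M]$ is dominated by the full $M$-step $V_{t+j}(x;\hat{\theta}_{\mathrm{ls}})\leq\overline{\alpha}\sigma(x)$, so every horizon completion is uniform in $t$.

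There is, however, a quantitative gap in the step you yourself flagged as the obstacle: as written, your argument does not close under the stated threshold $M>(\overline{\alpha}/\underline{\alpha})^2+1$. You anchor the averaging at $\hat{x}_{t+1}$, i.e.\ $\sum_k\sigma(x^{\star}_k)\leq(\overline{\alpha}/\underline{\alpha})\sigma(\hat{x}_{t+1})$, which gives a state with $\sigma(x^{\star}_{j^{*}})\leq\frac{\overline{\alpha}/\underline{\alpha}}{M-1}\sigma(\hat{x}_{t+1})$ and hence a horizon-extension increment at most $\overline{\alpha}\sigma(x^{\star}_{j^{*}})\leq\frac{(\overline{\alpha}/\underline{\alpha})^2}{M-1}\,\underline{\alpha}\,\sigma(\hat{x}_{t+1})$. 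But the dissipation inequality must dominate this increment by $c_t(\hat{x}_t,u_t)\geq\underline{\alpha}\sigma(\hat{x}_t)$, and the only generic comparison between the two states is $\sigma(\hat{x}_{t+1})\leq(\overline{\alpha}/\underline{\alpha})\sigma(\hat{x}_t)$; this costs an extra factor and forces $M>(\overline{\alpha}/\underline{\alpha})^3+1$, strictly stronger than the hypothesis of Proposition~\ref{propo:TermIII-ce}. The fix is to anchor the averaging at $\hat{x}_t$, which is exactly what the paper does with the states $\tilde{x}^{j}_{t+1}=\phi_{t}(j,\hat{x}_t,\hat{u}^{t}_{0:j-1};\hat{\theta}_{\mathrm{ls}})$: by Bellman optimality the $(M-1)$-step optimal trajectory from $\hat{x}_{t+1}$ in your identity is the tail of the time-$t$ optimum, so its states satisfy $\underline{\alpha}\sum_{j=1}^{M-1}\sigma(\tilde{x}^{j}_{t+1})\leq V_t(\hat{x}_t;\hat{\theta}_{\mathrm{ls}})\leq\overline{\alpha}\sigma(\hat{x}_t)$. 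This yields $\sigma(\tilde{x}^{j^{*}}_{t+1})\leq\frac{\overline{\alpha}/\underline{\alpha}}{M-1}\sigma(\hat{x}_t)$, so the increment is bounded by $\frac{(\overline{\alpha}/\underline{\alpha})^2}{M-1}\,\underline{\alpha}\,\sigma(\hat{x}_t)\leq\frac{(\overline{\alpha}/\underline{\alpha})^2}{M-1}\,c_t(\hat{x}_t,u_t)$, and $\gamma=1-\frac{(\overline{\alpha}/\underline{\alpha})^2}{M-1}>0$ precisely when $M>(\overline{\alpha}/\underline{\alpha})^2+1$. With that substitution your telescoping proof goes through and recovers the proposition with the paper's constants.
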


\begin{figure*}[t!]
    %% Legend
    \centering
    %% learning/performance curves
	\begin{minipage}{.31\textwidth}
		\centering
		\includegraphics[width=\linewidth]{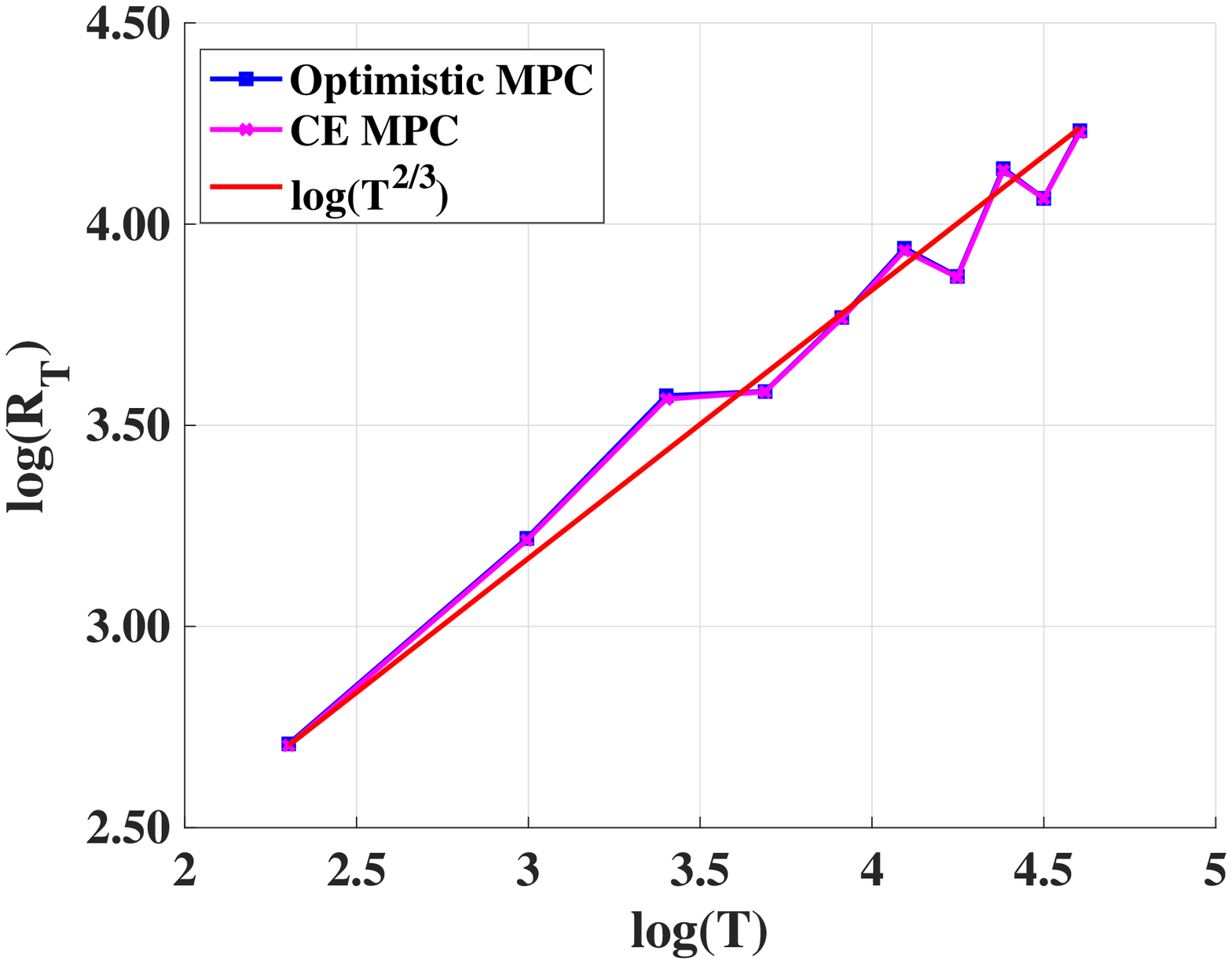}
% 		\captionof{figure}{}
% 		\label{fig:}
	\end{minipage}
	\begin{minipage}{.31\textwidth}
		\centering
		\includegraphics[width=\linewidth]{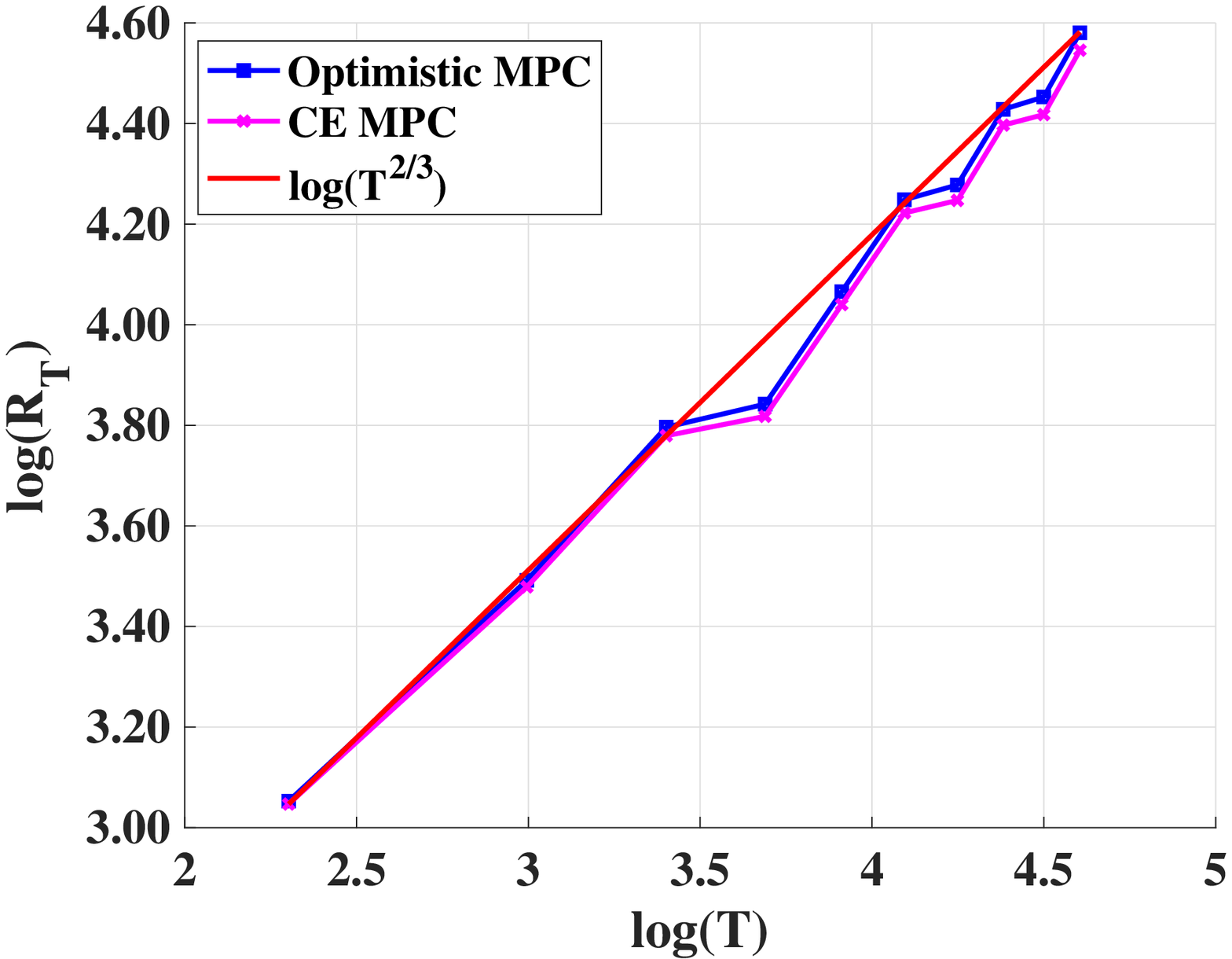}
% 		\captionof{figure}{}
% 		\label{fig:}
	\end{minipage}
    %% test performance curves
	\begin{minipage}{.31\textwidth}
		\centering
		\includegraphics[width=\linewidth]{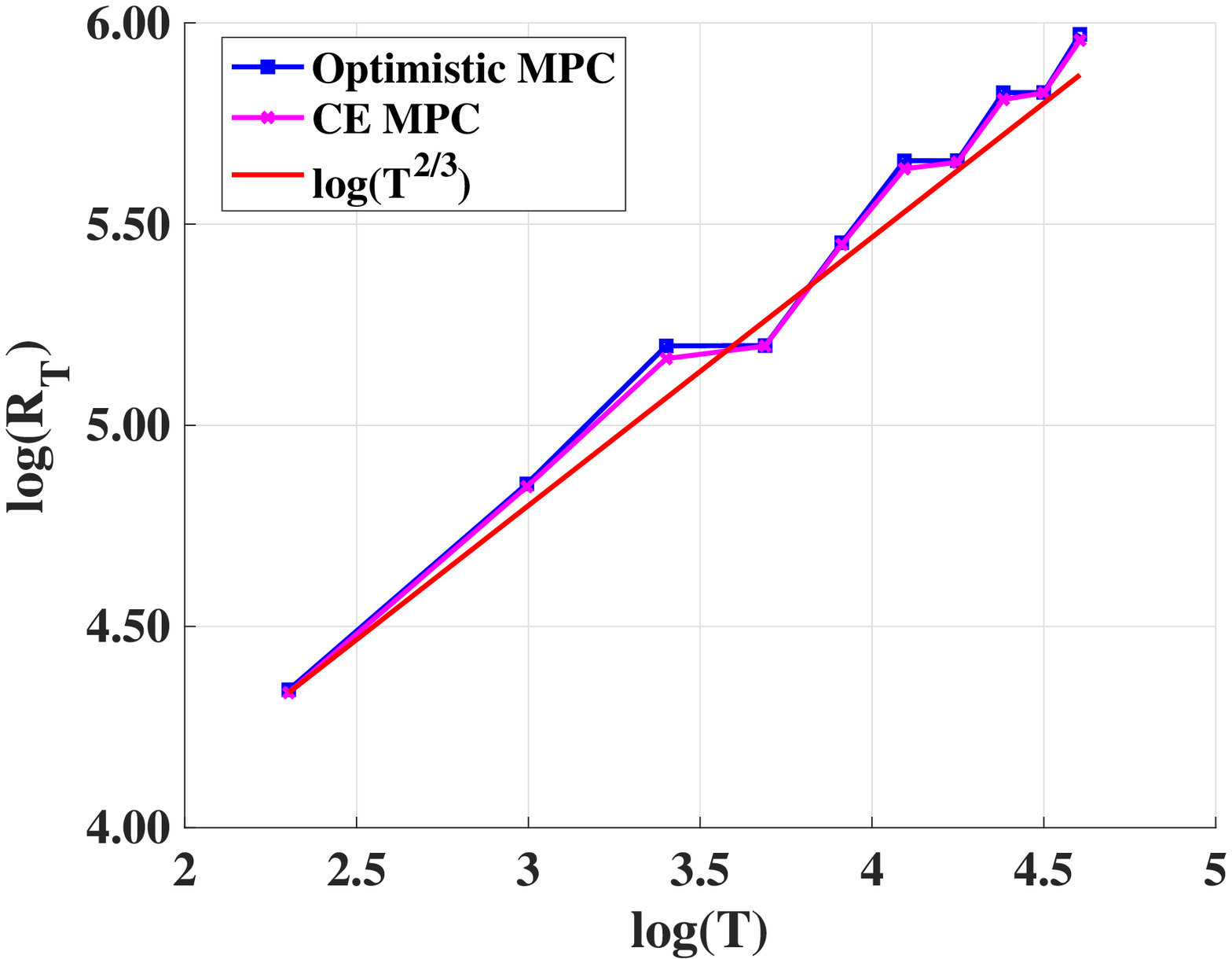}
% 		\captionof{figure}{}
% 		\label{fig:}
	\end{minipage}
	\centering
\caption{Plot of variation of $\log{R_T}$ as a function of $\log{T}$. Left: $c_t(x,u) = (x-b)^\top Q_t(x-b) + u^\top R_t u$, where $b = [0.01,0.01]^\top$, and $Q_t$ and $R_t$ are random with their elements in the range $[0.375,0.625]$. Middle: $c_t(x,u) = \sigma_\mathcal{X}(x) + u^\top u$, where $\mathcal{X}$ is the ball of radius $0.25$ centered at $0.5$. Right: $c_t(x,u) = \vert (x(1)-b)\vert ^3 + (x(2)-b)^2 + u^\top u$, where $b = 0.1$.}
\label{fig:regret}
	
\end{figure*}

The proof for the O-MPC algorithm is significantly more challenging because we assume that  the stability assumption  is true only for the true system $\theta^\star$, whereas the quantity to be analyzed is the cumulative cost of the MPC controller for a time-varying system. The proof uses the fact that estimate is optimistic to leverage Assumption \ref{as:stability} satisfied by $\theta^\star$. This leads to a Lyapunov-like condition with an additional  term that is proportional to the difference between $\theta^\star$ and $\hat{\theta}_t$. The novelty of the proof technique is how the optimistic estimate is used to establish the Lyapunov-like condition. This additional  term at every time step leads to the $\mathcal{O}(T/\sqrt{T_0})$ overall regret instead of $\mathcal{O}(1)$ as in CE-MPC.
\begin{proposition}[Regret of Term III for O-MPC]
\label{propo:TermIII-o}
Suppose Assumption \ref{ass:system-model} holds, $\theta^{\star}$ satisfies Assumption \ref{as:stability},  and $T_0$ satisfies the conditions in Proposition \ref{prop:estimation}. 
Assume that $\overline{\alpha}/\underline{\alpha} < 2$. Let  $\mathrm{Term~III}$ be as defined in \eqref{eq:term-splitting} and let $\pi = \pi_{\mathrm{o}}$.  Then there exists $M, \overline{T}$ such that, for $T > \overline{T}$, with probability greater than $(1-\delta)$
\begin{align*}
\mathrm{Term~III} \leq \mathcal{O}({T}/{\sqrt{T_{0}}})
\end{align*}
\end{proposition}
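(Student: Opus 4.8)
The plan is to exhibit the optimistic cost-to-go $\widehat{V}_t(x) := \min_{\theta \in \widehat{\Theta}} V_t(x;\theta)$ as a Lyapunov-like function for the realized closed loop and to telescope a one-step decrease inequality over $t\in[T_0+1,T]$. In Term III the realized trajectory evolves under the optimistic parameters, $\hat{x}_{t+1}=A(\hat{\theta}_t)\hat{x}_t+B(\hat{\theta}_t)\bar{u}_t$, and $\bar{u}_t$ is the first element of the minimizer of the $M$-horizon problem solved with the \emph{fixed} parameter $\hat{\theta}_t$; hence the predicted and realized one-step states coincide, and the principle of optimality gives the exact identity $\widehat{V}_t(\hat{x}_t)=c_t(\hat{x}_t,\bar{u}_t)+V^{(M-1)}_{t+1}(\hat{x}_{t+1};\hat{\theta}_t)$, where $V^{(N)}$ denotes the $N$-horizon cost-to-go (so $V_t=V^{(M)}_t$). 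The single place where optimism enters is the upper bound: since $\theta^{\star}\in\widehat{\Theta}$ with probability at least $1-\delta$ by Proposition \ref{prop:estimation}, and $\theta^{\star}$ satisfies Assumption \ref{as:stability}(ii), we get $\widehat{V}_t(x)\le V_t(x;\theta^{\star})\le\overline{\alpha}\,\sigma(x)$ for every $x$. Together with the lower bound $\widehat{V}_t(x)\ge\underline{\alpha}\,\sigma(x)$ from Assumption \ref{as:stability}(i), this plays for the optimistic value the role that Assumption \ref{as:stability} played for the single fixed estimate in Proposition \ref{propo:TermIII-ce}.

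Next I would adapt the horizon-increment argument of \cite{grimm2005model} --- the same machinery underlying Proposition \ref{propo:TermIII-ce} --- to convert the identity above into a decrease of $\widehat{V}$. The obstruction relative to the CE-MPC case is that the upper bound $V_{t+1}(\cdot;\hat{\theta}_t)\le\overline{\alpha}\sigma(\cdot)$ needed to control the horizon gap $V^{(M)}_{t+1}(\cdot;\hat{\theta}_t)-V^{(M-1)}_{t+1}(\cdot;\hat{\theta}_t)$ holds for the fixed parameter $\hat{\theta}_t$ only where $\hat{\theta}_t$ is the optimistic minimizer; at the \emph{next} state $\hat{x}_{t+1}$ the parameter $\hat{\theta}_t$ is generally no longer optimal. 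I would close this gap by Lipschitz continuity of $V$ in $\theta$ on the compact $\Theta$ (valid because the predicted states and inputs remain in a compact set and the costs are locally Lipschitz, Assumption \ref{ass:system-model}(iv)): $V_{t+1}(\hat{x}_{t+1};\hat{\theta}_t)\le V_{t+1}(\hat{x}_{t+1};\theta^{\star})+L\|\hat{\theta}_t-\theta^{\star}\|_F\le\overline{\alpha}\sigma(\hat{x}_{t+1})+L\beta(\delta)$, where $\|\hat{\theta}_t-\theta^{\star}\|_F\le 2\beta(\delta)=\mathcal{O}(1/\sqrt{T_0})$ since both $\hat{\theta}_t$ and $\theta^{\star}$ lie in $\widehat{\Theta}$. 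Feeding this perturbed upper bound into the Grimm horizon-gap estimate, and using $V^{(M)}_{t+1}(\hat{x}_{t+1};\hat{\theta}_t)\ge\widehat{V}_{t+1}(\hat{x}_{t+1})$ (minimality over $\theta$, which is in the right direction), yields the one-step inequality $c_t(\hat{x}_t,\bar{u}_t)\le\widehat{V}_t(\hat{x}_t)-\widehat{V}_{t+1}(\hat{x}_{t+1})+\mathcal{O}(1/\sqrt{T_0})$ once $M$ exceeds a threshold determined by $\overline{\alpha}/\underline{\alpha}$.

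Telescoping this inequality over $t\in[T_0+1,T]$ collapses the $\widehat{V}$ terms to $\widehat{V}_{T_0+1}(\hat{x}_{T_0+1})-\widehat{V}_{T+1}(\hat{x}_{T+1})\le\overline{\alpha}\sigma(\hat{x}_{T_0+1})=\mathcal{O}(1)$ --- the initial state $\hat{x}_{T_0+1}=y_{T_0+1}$ being bounded by Assumption \ref{ass:system-model}(iii) and the Term I analysis --- while the $T-T_0=\mathcal{O}(T)$ per-step error terms sum to $\mathcal{O}(T/\sqrt{T_0})$. Since the second summand of Term III, $J_{T_0+1:T}(u^{\pi^{\star}}_{T_0+1:T};\theta^{\star})$, is nonnegative, this bounds $\mathrm{Term~III}\le\sum_{t}c_t(\hat{x}_t,\bar{u}_t)\le\mathcal{O}(1)+\mathcal{O}(T/\sqrt{T_0})=\mathcal{O}(T/\sqrt{T_0})$, as claimed.

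The main obstacle, and what forces the hypotheses ``there exists $M$'' and ``$\overline{\alpha}/\underline{\alpha}<2$'' in the statement, is establishing that the realized states $\hat{x}_t$ stay uniformly bounded; without this the constant $L$ and the validity of $\widehat{V}_t(x)\le\overline{\alpha}\sigma(x)$ over the relevant region cannot be made uniform in $t$, and the per-step slack would not be $\mathcal{O}(1/\sqrt{T_0})$. I would handle this by a bootstrap induction: assuming boundedness up to time $t$, the decrease inequality combined with $\underline{\alpha}\sigma(\hat{x}_t)\le\widehat{V}_t(\hat{x}_t)$ and $\widehat{V}_{t+1}(\hat{x}_{t+1})\le\overline{\alpha}\sigma(\hat{x}_{t+1})$ forces a recursion of the form $\sigma(\hat{x}_{t+1})\le\gamma\,\sigma(\hat{x}_t)+\mathcal{O}(1/\sqrt{T_0})$ whose contraction factor $\gamma$ is governed by $\overline{\alpha}/\underline{\alpha}$; the condition $\overline{\alpha}/\underline{\alpha}<2$ guarantees $\gamma<1$, so the state cannot escape to infinity and $\sum_t\sigma(\hat{x}_t)\le\mathcal{O}(1)+\mathcal{O}(T/\sqrt{T_0})$, while taking $T>\overline{T}$ keeps the accumulated slack negligible relative to the contraction. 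Making the interplay between this state-boundedness bootstrap and the Grimm horizon-gap estimate rigorous --- in particular pinning down the thresholds on $M$ and $\overline{T}$ so that the contraction and the summable error hold simultaneously --- is the technically delicate core of the proof.
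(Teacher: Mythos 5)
Your proposal is correct and takes essentially the same route as the paper's proof: the optimistic value $V_t(\hat{x}_t;\hat{\theta}_t)$ (your $\widehat{V}_t$) serves as the Lyapunov-like function, optimism plus $\theta^\star \in \widehat{\Theta}$ (Proposition \ref{prop:estimation}) transfers the $\overline{\alpha}\sigma$ upper bound from $\theta^\star$, the Grimm-style averaging over $j \in [1,M-1]$ yields the contraction for $M$ above a threshold, the parameter mismatch contributes an $\mathcal{O}(\norm{\delta\theta_t}) = \mathcal{O}(1/\sqrt{T_0})$ per-step slack (which the paper derives via explicit trajectory-difference sums $\sum_j (A^\star)^{t-j}\delta\theta_t \hat{z}_{j-1}$ rather than a packaged Lipschitz-in-$\theta$ lemma, but the content and the dependence of the constant on the state bound are identical), and $\overline{\alpha}/\underline{\alpha}<2$ with large $M$ drives the same bootstrap induction for uniform state boundedness, with your telescoping of $c_t \lesssim \widehat{V}_t - \widehat{V}_{t+1} + \mathcal{O}(1/\sqrt{T_0})$ equivalent to the paper's geometric iteration of $V_{t+1} \leq \overline{\gamma} V_t + \mathcal{O}(1/\sqrt{T_0})$. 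One cosmetic slip: in the bootstrap you cite $\underline{\alpha}\sigma(\hat{x}_t) \leq \widehat{V}_t(\hat{x}_t)$ and $\widehat{V}_{t+1}(\hat{x}_{t+1}) \leq \overline{\alpha}\sigma(\hat{x}_{t+1})$, whereas deriving the recursion $\sigma(\hat{x}_{t+1}) \leq (\overline{\alpha}/\underline{\alpha}-1+\gamma)\sigma(\hat{x}_t) + \mathcal{O}(1/\sqrt{T_0})$ requires the bounds in the opposite slots (lower bound at $t+1$, upper bound at $t$) --- harmless, since you establish both bounds for all $t$.
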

Note that, by setting $T_{0} = T^{2/3}$, the regret of Term III becomes $\mathcal{O}(T^{2/3})$. 

\subsection{Proofs of the Main Results}
Proof of the main results now immediately follow by using the upperbounds obtained for Term I, II, and III. We state this formally below. 
\begin{proof}[Proof of Theorem \ref{thm:ce-rhc-thm}]
The proof follows by setting $T_0 = T^{2/3}$ and combining  Proposition \ref{propo:TermI-ce}, Proposition \ref{propo:TermII-ce} and Proposition \ref{propo:TermIII-ce}. 
\end{proof}

\begin{proof}[Proof of Theorem \ref{thm:o-rhc-thm}]
The proof follows by setting $T_0 = T^{2/3}$ and combining  Proposition \ref{propo:TermI-ce}, Proposition \ref{propo:TermII-o} and Proposition \ref{propo:TermIII-o}.
\end{proof}

\section{Numerical Experiments}

In this section we present three numerical examples to illustrate the performance of the O-MPC and CE-MPC algorithm. In all examples, we consider a linear dynamical system as  given in   \eqref{eq:stateequation} with $n = 2$ and $m = 1$. In each example, the system matrices $A^{\star}$ and $B^{\star}$ are chosen randomly, with the elements of $A^{\star}$ in the range $[0,0.5]$ and the elements of $B^{\star}$ in the range $[0,1]$. The preview $M$ is set to be $5$ in all the examples. This $M$ is computed as  $\left(\overline{\alpha}/\underline{\alpha}\right)^2+1$, as given in  Theorem \ref{thm:o-rhc-thm}, for the fixed quadratic cost given by $Q = I$ and $R = I$. We select different sets of cost functions for each example, as:

\textit{Example 1:} we select quadratic cost functions, $c_t(x,u) = (x-b)^\top Q_t(x-b) + u^\top R_t u$, where $b = [0.01, 0.01]^\top$, and $Q_t$ and $R_t$ are randomly chosen diagonal matrices with each diagonal element lying in the range $[0.375, 0.625]$. This example hence illustrates a specific case of online predictive linear quadratic control with time varying cost functions and unknown system model.  

\textit{Example 2:} we select the sequence of cost functions given by $c_t(x,u) = \sigma_{\mathcal{X}}(x) + u^\top u$, where $\sigma_{\mathcal{X}}(x) := \inf_{y \in \mathcal{X}} \| y -  x\|^{2}$, and $\mathcal{X}$ is the ball of radius $0.25$ centered at $0.5$. This example focuses on the convergence to a specific region in the state space characterized by $\mathcal{X}$, which is often an important objective in predictive control.

\textit{Example 3:} we select non-quadratic cost functions given by  $c_t(x,u) = \vert (x[1]-b)\vert ^3 + (x[2]-b)^2 + u^\top u$,  where $b = 0.1$. This example illustrates a specific case of online predictive control with non-convex cost functions. While most of the online predictive control techniques are reliant on convexity assumption, this assumption need not hold in all control problems (see \cite{grimm2005model} for details). 

We note that, in  these examples, because the costs are offset from zero, $u_t = 0$ will not achieve a sub-linear regret.

The variation of the regret for all these examples is shown in Fig. \ref{fig:regret}. We find that the scaling of the regret of O-MPC in all these examples matches with our theoretical guarantee. We also find that the regret of CE-MPC closely matches O-MPC in all these examples. This shows that we can achieve the same level of performance as O-MPC in many practically relevant examples by using the more computationally efficient CE-MPC algorithm in place of O-MPC. 
\section{Conclusion}

In this work, we present online learning and control algorithms for model predictive control of linear dynamical systems under standard system assumptions. Our work sheds light on methods, conditions and analysis for predictive control of unknown systems with limited preview. We show that by using a stability assumption that is standard in the asymptotic analysis of MPC algorithms,  we can guarantee $\mathcal{O}(T^{2/3})$ dynamic regret for this setting. In future, we plan to extend this algorithm and analysis to systems with adversarial disturbances.

\bibliographystyle{IEEEtran} 
\bibliography{Refs-OnlineMPC}

\begin{appendices}

\section{Proof of the Results in Section \ref{sec:analysis}}

\subsection{Proof of Proposition \ref{propo:TermI-ce}}
\label{sec:proof-TermI-ce}

\begin{proof}
In the estimation phase, the control input given by  \eqref{eq:control-sequence-estimation} is clearly bounded. Also by Assumption \ref{ass:system-model}.$(ii)$, the system is stable. This then implies that $x^{\pi}_t$ is bounded for all $t \in [1:T_0]$. Let
$b = \max_{t \in [1, T_{0}]} \|x^{\pi}_t\|$. We note that $b$ is a constant that does not increase with $T_{0}$. This follows from the fact that $\norm{u^{\pi}_t} \leq (n+1)m$ throughout the estimation phase and $\rho(A^\star)  < 1$. 
Let $L_{c}$ be the uniform Lipschitz constant for all $c_t$ over the closed and bounded set $\{(x,u): \norm{x} \leq b, \norm{u} \leq (n+1)m\}$. It then follows that $c_t(x^{\pi}_t,u^{\pi}_t) \leq L_{c} (\norm{x^{\pi}_t} + \norm{u^{\pi}_t}) + c_t(0,0) \leq L_{c} ( b + (n+1)m) + c_t(0,0)$.  Hence, $c_t \leq \mathcal{O}(1)$ for all $t \in [1:T_0]$. Hence, summing over all $t \in [1:T_0]$, we get $
\sum_{t=1}^{T_0} c_t(x^{\pi}_t,u^{\pi}_t) \leq \mathcal{O}(T_0)$. 
Thus,
\beq
J_{1:T_{0}}(u^{\pi}_{1:T_{0}}; \theta^{\star}) -  J_{1:T_{0}}(u^{\pi^{\star}}_{1:T_{0}};\theta^{\star}) \leq \mathcal{O}(T_{0})  \nonumber 
\eeq
\end{proof}

\subsection{Proof of Proposition \ref{propo:TermII-o}}
\label{sec:proof-TermII-o}

Proposition \ref{propo:TermII-ce} and Proposition \ref{propo:TermII-o} characterizes the bound on Term II. Here, we only give the proof for Proposition \ref{propo:TermII-o} (for the O-MPC algorithm). The proof for Proposition \ref{propo:TermII-ce} (for the CE-MPC algorithm) is implied by this. 

\begin{proof}
Consider the system evolution $\tilde{x}_{t+1} = A^{\star}\tilde{x}_t + B^{\star}u^{\pi_{\mathrm{o}}}_t, ~\tilde{x}_{T_0+1} = \hat{x}_{T_0+1}$.  Let
\begin{align} 
& \tilde{x}^{\delta \theta}_t = \sum_{j=T_0+2}^{t} (A^\star)^{t-j}(\delta \theta_{j-1})\hat{z}_{j-1},~\text{where},~\delta \theta_j = (\theta^\star -\hat{\theta}_j),  \nonumber \\
& \hat{z}^\top_j = [\hat{x}^\top_j, (u^{\pi_{\mathrm{o}}}_j)^\top], ~ \tilde{x}^{\delta \theta}_{T_0+1} = 0.
\label{eq:TermII-o-Eq1}
\end{align}

We will show that  $\tilde{x}_{t} = \hat{x}_{t} + \tilde{x}^{\delta \theta}_t$ for all $t \geq T_0+1$ by mathematical induction. This trivially holds for $k = T_0+1$. Now, let $\tilde{x}_{k} = \hat{x}_{k} + \tilde{x}^{\delta \theta}_k$ be true for some $k \in [T_0+1:T]$. Then 
\begin{align}
\tilde{x}_{k+1} &= A^\star\tilde{x}_k +B^\star u^{\pi_{\mathrm{o}}}_k = A^\star(\hat{x}_{k} + \tilde{x}^{\delta \theta}_k) +B^\star u^{\pi_{\mathrm{o}}}_k \nonumber\\
& = \theta^\star \hat{z}_k + A^\star\tilde{x}^{\delta \theta}_{k} = \hat{\theta}_k \hat{z}_k + \delta \theta_k\hat{z}_k + A^\star\tilde{x}^{\delta \theta}_{k} \nonumber \\
& = \hat{x}_{k+1} + \tilde{x}^{\delta \theta}_{k+1}. 
\label{eq:TermII-o-Eq2}
\end{align}
This completes the induction argument. 

Similarly, we will show that $x^{\pi_{\mathrm{o}}}_t = \tilde{x}_t - (A^\star)^{t-T_0-1}\epsilon_{T_0+1}$ for all for all $t \geq T_0+1$ by mathematical induction. This holds at $t = T_0+1$, since $y_{T_0+1} = x_{T_0+1} + \epsilon_{T_0+1}$. Let $x^{\pi_{\mathrm{o}}}_k = \tilde{x}_k - (A^\star)^{k-T_0-1}(\epsilon_{T_0+1})$ hold for some $k \geq T_0+1$. Then 
\begin{align}
\label{eq:TermII-o-Eq22}
& x^{\pi_{\mathrm{o}}}_{k+1} = A^\star x^{\pi_{\mathrm{o}}}_k + B^\star u^{\pi_{\mathrm{o}}}_k \nonumber \\
& = A^\star (\tilde{x}_k - (A^\star)^{k-T_0-1}(\epsilon_{T_0+1})) + B^\star u^{\pi_{\mathrm{o}}}_k  \nonumber \\
& = A^\star \tilde{x}_k + B^\star u^{\pi_{\mathrm{o}}}_k - (A^\star)^{k+1-T_0-1}\epsilon_{T_0+1}. 
\end{align}
This completes the induction argument. 

Using Equations \ref{eq:TermII-o-Eq1} - \ref{eq:TermII-o-Eq22} we get
\begin{align}
\label{eq:prop-4-pf-step1}
& \sum_{t = T_0+1}^{T} c_t(x^{\pi_{\mathrm{o}}}_t, u^{\pi_{\mathrm{o}}}_t) \stackrel{(a)}{=} \sum_{t = T_0+1}^{T} c_t(\tilde{x}_t - (A^\star)^{t-T_0-1}\epsilon_{T_0+1}, u^{\pi_{\mathrm{o}}}_t)  \nonumber \\
& \stackrel{(b)}{=} \sum_{t = T_0+1}^{T} c_t(\hat{x}_t + \tilde{x}^{\delta \theta}_t - (A^\star)^{t-T_0-1}\epsilon_{T_0+1}, u^{\pi_{\mathrm{o}}}_t). 
\end{align}
Here, we get $(a)$ by writing $x^{\pi_{\mathrm{o}}}_t = \tilde{x}_t - (A^\star)^{t-T_0-1}\epsilon_{T_0+1}$ and $(b)$ by writing $\tilde{x}_{t} = \hat{x}_{t} + \tilde{x}^{\delta \theta}_t$.

According to Proposition \ref{prop:estimation}, $\theta^\star \in \widehat{\Theta}$, with high probability. We now claim that when $\theta^\star \in \widehat{\Theta}$, under the O-MPC algorithm,  $\max_{t \in [T_{0}+1, T]} \|\hat{x}_t\|$ is bounded and the bound does not depend on $T$. The proof for this claim is given as apart of the proof of  Proposition \ref{propo:TermIII-o}. 

The boundedness of  $\hat{x}_t$ implies the boundedness of  $u^{\pi_{\mathrm{o}}}_t$  since $u^{\pi_{\mathrm{o}}}_t$ is the solution of {O-MPC} Algorithm, whose solutions are continuous in $\hat{x}_t$ (for a proof, see \cite[Theorem 1.17]{rockafellar2009variational}). Then, it follows that $x^{\pi_{\mathrm{o}}}_t$ is bounded since $\rho(A^{\star}) < 1$ and $u^{\pi_{\mathrm{o}}}_t$ is bounded. The term  $(A^\star)^{t-T_0-1}\epsilon_{T_0+1}$ is also bounded, since $\rho(A^{\star}) < 1$ and $\epsilon_{T_0+1}$ is bounded.

For convenience, denote the bound on $\|\hat{x}_t\|$, $\|x^{\pi_{\mathrm{o}}}_t\|$ and $\|(A^\star)^{t-T_0-1}\epsilon_{T_0+1}\|$ as $b$ and the bound on $u^{\pi_{\mathrm{o}}}_t$ as $d$. Note that $b$ and $d$ are constants that do not change with the horizon length $T$. Let $\alpha_0$ be the local Lipschitz constant for all $c_t$s. Then, under the event $\theta^\star \in \widehat{\Theta}$, using Eq. \eqref{eq:prop-4-pf-step1} we get
\begin{align}
    \label{eq:prop-4-pf-step2}
    & \sum_{t = T_0+1}^{T} c_t(x^{\pi_{\mathrm{o}}}_t, u^{\pi_{\mathrm{o}}}_t) \leq \sum_{t = T_0+1}^{T} c_t(\hat{x}_t, u^{\pi_{\mathrm{o}}}_t) \nonumber \\
    & + \alpha_0  \sum_{t = T_0+1}^{T}  \left( \norm{\tilde{x}^{\delta \theta}_t} +\norm{(A^\star)^{t-T_0-1}\epsilon_{T_0+1}}\right).
\end{align}
Since $\rho(A^{\star}) < 1$, there exist $c_\rho > 0$ and $\lambda_\rho < 1$ such that $\norm{A^{k}} \leq c_\rho \lambda^{k}_\rho$. This implies that \begin{align}
    & \sum_{t = T_0+1}^{T} \norm{\tilde{x}^{\delta \theta}_t}  \stackrel{(c)}{=} \sum_{t = T_0+1}^{T} \norm{\sum_{j=T_0+1}^{t} (A^\star)^{t-j}(\delta \theta_{j-1})\hat{z}_{j-1}} \nonumber \\
    & {\leq} \sum_{t = T_0+1}^{T} \sum_{j=T_0+1}^{t} \norm{ (A^\star)^{t-j}(\delta \theta_{j-1})\hat{z}_{j-1}} \nonumber \\
    &{\leq} \sum_{t = T_0+1}^{T} \sum_{j=T_0+1}^{t} \norm{ (A^\star)^{t-j}} \norm{\delta \theta_{j-1}}\norm{\hat{z}_{j-1}} 
    \nonumber \\
    & \stackrel{(d)}{\leq} (b+d)\sum_{t = T_0+1}^{T} \sum_{j=T_0+1}^{t} \norm{ (A^\star)^{t-j}} \norm{\delta \theta_{j-1}} \nonumber \\
    & \stackrel{(e)}{\leq} 2(b+d)\beta(\delta) \sum_{t = T_0+1}^{T} \sum_{j=T_0+1}^{t} \norm{ (A^\star)^{t-j}} \nonumber \\
    & \stackrel{(f)}{\leq} 2(b+d)c_\rho \beta(\delta) \sum_{t = T_0+1}^{T} \sum_{j=T_0+1}^{t} \lambda^{t-j}_\rho \nonumber \\
    \label{eq:prop-4-pf-step3}
    &\leq \frac{2(b+d)c_\rho \beta(\delta)}{1-\lambda_\rho} T  \stackrel{(g)}{=}  \mathcal{O}\left(\frac{T}{\sqrt{T_0}}\right)
\end{align}
Here, we get $(c)$ by definition of $\tilde{x}^{\delta \theta}_t$,   $(d)$ by the bound on $\hat{x}_t$ and $u^{\pi_{\mathrm{o}}}_t$, $(e)$ by using the fact that  $\norm{\delta \theta_{j-1}} \leq 2\beta(\delta)$ by Proposition \ref{prop:estimation}, $(f)$ by using the fact that $\norm{(A^\star)^{k}} \leq c_\rho \lambda^k_\rho$, and $(b)$ by using the fact that $\beta(\delta) =  \mathcal{O}\left({1}/{\sqrt{T_0}}\right) $. 

Now,
\begin{align}
    \label{eq:prop-4-pf-step4}
    & \sum_{t = T_0+1}^{T} \norm{(A^\star)^{t-T_0-1}\epsilon_{T_0+1}} \leq \epsilon_c \sum_{t = T_0+1}^{T} c_\rho \lambda^{t-T_0-1}_\rho \nonumber \\
    & \leq \frac{\epsilon_c c_\rho}{1-\lambda_\rho} = \mathcal{O}(1).
\end{align}

Using \eqref{eq:prop-4-pf-step3} and \eqref{eq:prop-4-pf-step4} in \eqref{eq:prop-4-pf-step2}, we get, with probability greater than $1-\delta$, 
\beq 
\sum_{t = T_0+1}^{T} c_t(x^{\pi_{\mathrm{o}}}_t, u^{\pi_{\mathrm{o}}}_t) - \sum_{t = T_0+1}^{T} c_t(\hat{x}_t, u^{\pi_{\mathrm{o}}}_t) \leq \mathcal{O}\left(\frac{T}{\sqrt{T_0}}\right). \nonumber 
\eeq 
This implies that, with probability greater that $1-\delta$, 
\begin{align*}
 & J_{T_{0}+1:T}(u^{\pi_{\mathrm{o}}}_{T_{0}+1:T}; \theta^{\star}) -  J_{T_{0}+1:T}(u^{\pi_{\mathrm{o}}}_{T_{0}+1:T}; \hat{\theta}_{T_0+1:T}) \nonumber \\
 & \leq \mathcal{O}(\frac{T}{\sqrt{T_{0}}})
\end{align*}
\end{proof}

\subsection{Proofs of Proposition \ref{propo:TermIII-ce}}
\label{sec:proof-TermIII-ce}

\begin{proof}
Given the system with parameter $\theta = [A(\theta), B(\theta)]$, initial time $t$, initial state $x$, and control sequence $u_{t:T}$,  consider the  system evolution $x_{\tau+1} = A(\theta) x_{\tau} + B(\theta) u_{\tau}$, for $\tau \in [t, T]$ with $x_{t} = x$. Let $\phi_{t}(k, x, u_{t:T},\theta)$  denotes $x_{t+k}$ (the state at  time $t+k$) for any $k \geq 0$.  We will also denote this as  $\phi_{t}(k, x, u_{t:t+k-1}, \theta)$ since $u_{t+k:T}$ does not affect $x_{t+k}$. 

We will first show that $ J_{T_{0}+1:T}(u^{\pi_{\mathrm{ce}}}_{T_{0}+1:T}; \hat{\theta}_{\mathrm{ls}})$ is $\mathcal{O}(1)$  under the assumption stated. 
 
For any $t \in [T_{0}+1, T]$, consider the system evolution  $\hat{x}_{t+1} = A(\hat{\theta}_{\mathrm{ls}}) \hat{x}_{t}+B (\hat{\theta}_{\mathrm{ls}}) u^{\pi_{\mathrm{ce}}}_{t}$, with the initial state $\hat{x}_{T_{0}+1} = y_{T_{0}+1}$. Let
\begin{align*}
\hat{u}^{t}_{0:M-1} = &\argmin_{u_{t:t+M-1}} \sum^{t+M-1}_{k=t} c_{k}(x_k,u_k), \nonumber \\
& \text{s.t.}~~ x_{k+1} = A(\hat{\theta}_{\mathrm{ls}}) x_k+B(\hat{\theta}_{\mathrm{ls}}) u_k,~ x_{t} = \hat{x}_{t}.
\end{align*}
Recall from Eq. \eqref{eq:cost-to-go} that the optimal value of the above problem is ${V}_{t}(\hat{x}_{t}; \hat{\theta}_{\mathrm{ls}})$. Also, please note that according to the CE-MPC algorithm,   $u^{\pi_{\mathrm{ce}}}_{t} = \hat{u}^{t}_{0}$. Now, 
\begin{align}
&{V}_{t}(\hat{x}_{t}; \hat{\theta}_{\mathrm{ls}}) =  \sum_{k = 0}^{M-1} c_{t+k}(\phi_{t}(k, \hat{x}_{t}, \hat{u}^{t}_{0:k-1} ;\hat{\theta}_{\mathrm{ls}}), \hat{u}^{t}_{k} ) \nonumber \\
& \stackrel{(a)}{=}  c_{t}(\hat{x}_{t}, u^{\pi_{\mathrm{ce}}}_{t}) +  \sum_{k = 0}^{M-2} c_{t+k+1}(\phi_{t}(k+1, \hat{x}_{t}, \hat{u}^{t}_{0:k} ;\hat{\theta}_{\mathrm{ls}}) , \hat{u}^{t}_{k+1} ) \nonumber \\
& \stackrel{(b)}{=}  c_{t}(\hat{x}_{t}, u^{\pi_{\mathrm{ce}}}_{t} ) +  \sum_{k = 0}^{M-2} c_{t+k+1}(\phi_{t+1}(k, \hat{x}_{t+1}, \hat{u}^{t}_{1:k} ;\hat{\theta}_{\mathrm{ls}}) , \hat{u}^{t}_{k+1} ) \nonumber \\
&=  c_{t}(\hat{x}_{t}, u^{\pi_{\mathrm{ce}}}_{t} ) +  \sum_{k = 0}^{j-2} c_{t+k+1}(\phi_{t+1}(k, \hat{x}_{t+1}, \hat{u}^{t}_{1:k} ;\hat{\theta}_{\mathrm{ls}}) , \hat{u}^{t}_{k+1} ) \nonumber \\
& + \sum_{k = j-1 }^{M-2} c_{t+k+1}(\phi_{t+1}(k, \hat{x}_{t+1}, \hat{u}^{t}_{1:k} ;\hat{\theta}_{\mathrm{ls}}) , \hat{u}^{t}_{k+1} ), \nonumber \\
&=  c_{t}(\hat{x}_{t}, u^{\pi_{\mathrm{ce}}}_{t} ) +  \sum_{k = 0}^{j-2} c_{t+k+1}(\phi_{t+1}(k, \hat{x}_{t+1}, \hat{u}^{t}_{1:k} ;\hat{\theta}_{\mathrm{ls}}) , \hat{u}^{t}_{k+1} ) \nonumber \\
& + \sum_{k = 0 }^{M-j-1} c_{t+j+k}(\phi_{t+1}(j+k-1, \hat{x}_{t+1}, \hat{u}^{t}_{1:j+k-1} ;\hat{\theta}_{\mathrm{ls}}) , \hat{u}^{t}_{j+k} ), \nonumber \\
\label{eq:Vt-rewrite-pf}
&\stackrel{(c)}{=}  c_{t}(\hat{x}_{t}, u^{\pi_{\mathrm{ce}}}_{t} ) +  \sum_{k = 0}^{j-2} c_{t+k+1}(\phi_{t+1}(k, \hat{x}_{t+1}, \hat{u}^{t}_{1:k} ;\hat{\theta}_{\mathrm{ls}}) , \hat{u}^{t}_{k+1} )\nonumber\\
& + \sum_{k = 0 }^{M-j-1} c_{t+j+k}(\phi_{t+j}(k, \tilde{x}^{j}_{t+1}, \hat{u}^{t}_{j:j+k-1} ;\hat{\theta}_{\mathrm{ls}}) , \hat{u}^{t}_{j+k} ), 
\end{align}
Here, we get $(a)$ by using the fact that $\hat{u}^{t}_{0} = u^{\pi_{\mathrm{ce}}}_{t}$, $(b)$ by using the fact that $\hat{x}_{t+1} =  \phi_{t}(1, \hat{x}_{t}, \hat{u}^{t}_{0:0} ;\hat{\theta}_{\mathrm{ls}})$ and considering the summation from $t+1$ with initialization $\hat{x}_{t+1}$, and $(c)$  by denoting $\tilde{x}^{j}_{t+1} =  \phi_{t+1}(j-1, \hat{x}_{t+1}, \hat{u}^{t}_{1:j-1} ;\hat{\theta}_{\mathrm{ls}})$ and considering the sequence from time step $t+j$. 

Similarly,
\begin{align}
& {V}_{t+1}(\hat{x}_{t+1}; \hat{\theta}_{\mathrm{ls}})\nonumber \\
& =  \sum_{k = 0}^{M-1} c_{t+1+k}(\phi_{t+1}(k, \hat{x}_{t+1}, \hat{u}^{t+1}_{0:k-1} ; \hat{\theta}_{\mathrm{ls}}), \hat{u}^{t+1}_{k} ) \nonumber \\
&= \sum_{k = 0}^{j-2} c_{t+1+k}(\phi_{t+1}(k, \hat{x}_{t+1}, \hat{u}^{t+1}_{0:k-1} ;\hat{\theta}_{\mathrm{ls}}), \hat{u}^{t+1}_{k} ) \nonumber \\
& + \sum_{k = j-1}^{M-1} c_{t+1+k}(\phi_{t+1}(k, \hat{x}_{t+1}, \hat{u}^{t+1}_{0:k-1} ;\hat{\theta}_{\mathrm{ls}}), \hat{u}^{t+1}_{k} ) \nonumber\\
&= \sum_{k = 0}^{j-2} c_{t+1+k}(\phi_{t+1}(k, \hat{x}_{t+1}, \hat{u}^{t+1}_{0:k-1} ;\hat{\theta}_{\mathrm{ls}}), \hat{u}^{t+1}_{k} ) \nonumber \\
& + \sum_{k = 0}^{M-j} c_{t+j+k}(\phi_{t+1}(j+k-1, \hat{x}_{t+1}, \hat{u}^{t+1}_{0:j+k-2} ;\hat{\theta}_{\mathrm{ls}}), \hat{u}^{t+1}_{j+k-1} ) \nonumber\\
&\stackrel{(d)}{\leq}   \sum_{k = 0}^{j-2} c_{t+1+k}(\phi_{t+1}(k, \hat{x}_{t+1}, \hat{u}^{t}_{1:k} ;\hat{\theta}_{\mathrm{ls}}), \hat{u}^{t}_{k+1} ) \nonumber \\
& + \min_{\tilde{u}_{0:M-j}} \sum_{k = 0}^{M-j} c_{t+j+k}(\phi_{t+j}(k, \tilde{x}^{j}_{t+1}, \tilde{u}_{0:k-1} ;\hat{\theta}_{\mathrm{ls}}), \tilde{u}_{k} ) \nonumber\\
&\stackrel{(e)}{\leq}   \sum_{k = 0}^{j-2} c_{t+1+k}(\phi_{t+1}(k, \hat{x}_{t+1}, \hat{u}^{t}_{1:k} ;\hat{\theta}_{\mathrm{ls}}), \hat{u}^{t}_{k+1} ) + V_{t+j}(\tilde{x}^{j}_{t+1}; \hat{\theta}_{\mathrm{ls}}) \nonumber \\
\label{eq:Vtp1-rewrite-pf}
&\stackrel{(f)}{\leq}   \sum_{k = 0}^{j-2} c_{t+1+k}(\phi_{t+1}(k, \hat{x}_{t+1}, \hat{u}^{t}_{1:k} ;\hat{\theta}_{\mathrm{ls}}), \hat{u}^{t}_{k+1} ) \nonumber \\
& +  \overline{\alpha} \sigma(\tilde{x}^j_{t+1})
\end{align}
Here, we get $(d)$  by changing  the optimal sequence $\hat{u}^{t+1}_{0:j-2}$ to the control sequence $ \hat{u}^{t}_{1:j-1}$ in the first $j-1$ steps starting from time $(t+1)$ and finding the minimizing sequence for the remaining steps. For this, we denote  $\tilde{x}^{j}_{t+1} =  \phi_{t+1}(j-1, \hat{x}_{t+1}, \hat{u}^{t}_{1:j-1} ;\hat{\theta})$ as the initial state for the remaining summation. Please note that $ \tilde{x}^{j}_{t+1} $  we introduced in $(c)$ and $(d)$ above are indeed identical. We get  $(e)$ from the definition $V_{t+j}$ and $(f)$ from the premise of the proposition that $\hat{\theta}_{\mathrm{ls}}$ satisfies  the conditions given in Assumption \ref{as:stability} (in particular, Assumption \ref{as:stability} .$(ii)$)

Now, using \eqref{eq:Vt-rewrite-pf} and \eqref{eq:Vtp1-rewrite-pf}, 
\begin{align}
\label{eq:term-I-pf-step-1}
& {V}_{t+1}(\hat{x}_{t+1}; \hat{\theta}_{\mathrm{ls}})  - {V}_{t}(\hat{x}_{t}; \hat{\theta}_{\mathrm{ls}}) \nonumber \\
& \leq   \overline{\alpha} \sigma(\tilde{x}^j_{t+1}) -c_{t}(\hat{x}_{t}, u^{\pi_{\text{ce}}}_{t} ) \leq   \overline{\alpha} \sigma(\tilde{x}^j_{t+1}) - \underline{\alpha} \sigma(\hat{x}_{t}),
\end{align}
where the first inequality is obtained by  canceling the common terms and using the fact that $c_{t}$s are positive functions, and the second inequality by using Assumption \ref{as:stability}.$(i)$

For any $j \in [1, M-1]$,  using the fact that  $\tilde{x}^{j}_{t+1} = \phi_{t+1}(j-1, \hat{x}_{t+1}, \hat{u}^{t}_{1:j-1} ;\hat{\theta}_{\mathrm{ls}}) =  \phi_{t}(j, \hat{x}_{t}, \hat{u}^{t}_{0:j-1} ;\hat{\theta}_{\mathrm{ls}})$, Assumption \ref{as:stability} .$(i)$, and Assumption \ref{as:stability} .$(ii)$, we get
\begin{align*}
& \underline{\alpha} \sum^{M-1}_{j=1} \sigma(\tilde{x}^j_{t+1}) \leq  \sum^{M-1}_{k=0} c_{t+k}( \phi_{t}(k, \hat{x}_{t}, \hat{u}^{t}_{0:k-1} ;\hat{\theta}_{\mathrm{ls}}), \hat{u}^{t}_{k}) \nonumber \\
& = {V}_{t}(\hat{x}_{t}; \hat{\theta}_{\mathrm{ls}})  \leq  \overline{\alpha} \sigma(\hat{x}_{t}).
\end{align*}
Hence, there exists $j^{*} \in [1, M-1]$ such that
\begin{align}
\label{eq:term-I-pf-step-2}
\sigma(\tilde{x}^{j^{*}}_{t+1}) \leq \frac{(\overline{\alpha}/\underline{\alpha})}{(M-1)} \sigma(\hat{x}_t).
\end{align}
Using \eqref{eq:term-I-pf-step-2} in \eqref{eq:term-I-pf-step-1}, for $j = j^{*}$, we get
\begin{align}
{V}_{t+1}(\hat{x}_{t+1}; \hat{\theta}_{\mathrm{ls}})  - {V}_{t}(\hat{x}_{t}; \hat{\theta}_{\mathrm{ls}})     \leq (\frac{(\overline{\alpha}/\underline{\alpha})^2}{(M-1)} - 1) \underline{\alpha} \sigma(\hat{x}_t)
\end{align}
Let $\gamma = \frac{(\overline{\alpha}/\underline{\alpha})^2}{(M-1)}$. Note that since $M > (\overline{\alpha}/\underline{\alpha})^2+1$ according to the premise of the proposition, we have $\gamma < 1$. Using this, we get 
\begin{align*}
& {V}_{t+1}(\hat{x}_{t+1}; \hat{\theta}_{\mathrm{ls}})  - {V}_{t}(\hat{x}_{t}; \hat{\theta}_{\mathrm{ls}})  \leq - (1 - \gamma) \underline{\alpha} \sigma(\hat{x}_t) \nonumber \\
& =  - (1 - \gamma) ({\underline{\alpha}}/{\overline{\alpha}}) \overline{\alpha} \sigma(\hat{x}_t)  \leq  - (1 - \gamma) ({\underline{\alpha}}/{\overline{\alpha}})  {V}_{t}(\hat{x}_{t}; \hat{\theta}_{\mathrm{ls}}) ,
\end{align*}
where we used Assumption \ref{as:stability} .$(ii)$ to get the last inequality. 
This will yield, 
\begin{align*}
{V}_{t+1}(\hat{x}_{t+1}; \hat{\theta}_{\mathrm{ls}})  \leq \bar{\gamma}  {V}_{t}(\hat{x}_{t}; \hat{\theta}_{\mathrm{ls}}) ,
\end{align*}
where $\bar{\gamma}  = 1 - (1 - \gamma) ({\underline{\alpha}}/{\overline{\alpha}})$. Applying the above inequality repeatedly, we get
\begin{align*}
& c_{t}(\hat{x}_{t},  u^{\pi_{\mathrm{ce}}}_{t}) \leq {V}_{t}(\hat{x}_{t}; \hat{\theta}_{\mathrm{ls}})  \leq \gamma {V}_{t-1}(\hat{x}_{t-1}; \hat{\theta}_{\mathrm{ls}})  \leq \ldots \nonumber \\
& \leq \gamma^{t-T_{0}-1}  {V}_{T_{0}+1}(\hat{x}_{T_{0}+1}; \hat{\theta}_{\mathrm{ls}}) \leq  \gamma^{t-T_{0}-1} \overline{\alpha} \sigma(\hat{x}_{T_{0}+1}). 
\end{align*}
Taking summation on both sides,
\begin{align*}
\sum^{T}_{t=T_{0}+1} c_{t}(\hat{x}_{t},  u^{\pi_{\mathrm{ce}}}_{t})  \leq \frac{\overline{\alpha}}{(1-\gamma)} \sigma(\hat{x}_{T_{0}+1}) = \mathcal{O}(1).
\end{align*}

This implies that  \beq J_{T_{0}+1:T}(u^{\pi_{\mathrm{ce}}}_{T_{0}+1:T}; \hat{\theta}_{\mathrm{ls}}) = \sum^{T}_{t=T_{0}+1} c_{t}(\hat{x}_{t}, u^{\pi_{\mathrm{ce}}}_{t}) = \mathcal{O}(1). \nonumber  \eeq 

This will also imply that $J_{T_{0}+1:T}(u^{\pi_{\mathrm{ce}}}_{T_{0}+1:T}; \hat{\theta}_{\mathrm{ls}}) -  J_{T_{0}+1:T}(u^{\pi^{\star}}_{T_{0}+1:T};\theta^{\star}) \leq  \mathcal{O}(1)$, which concludes the proof.
\end{proof}

\subsection{Proof of the Proposition \ref{propo:TermIII-o}}
\label{sec:proof-TermIII-o}

\begin{proof}
Given the system with parameter $\theta = [A(\theta), B(\theta)]$, initial time $t$, initial state $x$, and control sequence $u_{t:T}$,  consider the  system evolution $x_{\tau+1} = A(\theta) x_{\tau} + B(\theta) u_{\tau}$, for $\tau \in [t, T]$ with $x_{t} = x$. Let $\phi_{t}(k, x, u_{t:T},\theta)$  denotes $x_{t+k}$ (the state at  time $t+k$) for any $k \geq 0$.  We will also denote this as  $\phi_{t}(k, x, u_{t:t+k-1}, \theta)$ since $u_{t+k:T}$ does not affect $x_{t+k}$. 

We will first show that $J_{T_{0}+1:T}(u^{\pi_{\mathrm{o}}}_{T_{0}+1:T}; \hat{\theta}_{T_0+1:T})$ is $\mathcal{O}(T/\sqrt{T_0})$ under the stated assumptions. 

For any $t \in [T_{0}+1, T]$, consider the system evolution $\hat{x}_{t+1} = A(\hat{\theta}_{t}) \hat{x}_{t}+B (\hat{\theta}_{t}) u^{\pi_{\mathrm{o}}}_{t}$, with the initial state $\hat{x}_{T_{0}+1} = y_{T_{0}+1}$. Let
\begin{align}
(\hat{u}^{t}_{0:M-1}(\hat{\theta}_t),&  \hat{\theta}_t) =  \argmin_{u_{t:t+M-1}, \hat{\theta} \in \hat{\Theta}} \sum^{t+M-1}_{k=t} c_{k}(x_k,u_k), \nonumber \\
& \text{s.t.}~~ x_{k+1} = A(\hat{\theta}) x_k+B(\hat{\theta}) u_k,~ x_{t} = \hat{x}_{t}.
\label{eq:o-rhc-opt}
\end{align}

Let $\hat{x}^j_t = \phi_{t}(j, \hat{x}_{t}, \hat{u}^{t}_{0:j-1}(\hat{\theta}_t);\theta^{\star})$, and $\tilde{x}^j_t = \phi_{t+1}(j-1, \hat{x}_{t+1}, \hat{u}^{t}_{1:j-1}(\hat{\theta}_t) ;\theta^\star)$ for all $j \in [1,M]$. Also, define
\beq 
\tilde{u}^j_{0:M-1} = \argmin_{\tilde{u}_{0:M-1}} \sum_{k = 0}^{M-1}c_{t+j+k}(\phi_{t+j}(k, \hat{x}^j_{t}, \tilde{u}_{0:k-1};\theta^{\star}), \tilde{u}_k). \nonumber 
\eeq 

Now, by definition 
\begin{align}
& \hat{x}_{t+1} = A(\hat{\theta}_t)\hat{x}_t +B(\hat{\theta}_t)\hat{u}^t_0(\hat{\theta}_t) = (A(\hat{\theta}_t) - A(\theta^\star))\hat{x}_t \nonumber \\
& + (B(\hat{\theta}_t) - B(\theta^\star))\hat{u}^t_0(\hat{\theta}_t) + A(\theta^\star)\hat{x}_t + B(\theta^\star)\hat{u}^t_0(\hat{\theta}_t) \nonumber \\
& = \hat{x}^1_t - \delta\theta_t\hat{z}_t, ~ ~ \delta \theta_t = (\theta^{\star} -\hat{\theta}_t), ~\hat{z}^\top_t = \left[\hat{x}^\top_t, (\hat{u}^t_0(\hat{\theta}_t))^\top \right]. \nonumber 
\end{align}
Hence, by definition, $\forall ~k, ~ k \in [0,M-1]$,
\begin{align}
& \phi_{t+1}(k, \hat{x}_{t+1}, \hat{u}^t_{1:k}(\hat{\theta}_t);\theta^\star) - \phi_{t+1}(k, \hat{x}^1_{t}, \hat{u}^t_{1:k}(\hat{\theta}_t);\theta^\star)\nonumber \\
& = -A(\theta^\star)^{k}\delta\theta_t\hat{z}_t. 
\label{eq:TermIII-o-Eq1}
\end{align}

Consider $x_{k+1} = A(\theta^{\star})x_k + B(\theta^{\star})\hat{u}^t_{k-t}(\hat{\theta}_t), ~ k \in [t,t+M-1], ~x_t = \hat{x}_t$, and $\tilde{x}_{k+1} = A(\hat{\theta}_t)\tilde{x}_k + B(\hat{\theta}_t)\hat{u}^t_{k-t}(\hat{\theta}_t), ~\tilde{x}_t = \hat{x}_t$. Then, applying the same argument from Eq. \eqref{eq:TermII-o-Eq1} to \eqref{eq:TermII-o-Eq2} in the proof of Proposition \ref{propo:TermII-o}, we get that
\begin{align}
& x_{k} = \tilde{x}_{k} + x^{\delta \theta}_k, ~ \text{where,} ~ x^{\delta \theta}_k = \sum_{j=t+1}^{k} (A^\star)^{k-j}(\delta \theta_{t})\tilde{z}_{j-1}, \tilde{z}^\top_{j} \nonumber \\
& = [\tilde{x}^\top_{j},\hat{u}^t_{j-t}(\hat{\theta}_t)], ~x^{\delta \theta}_t = 0,~ \delta \theta_t = (\theta^{\star} -\hat{\theta}_t). 
\label{eq:TermIII-o-Eq2}
\end{align}

Since the solution to Eq. \eqref{eq:o-rhc-opt} is continuous in $\hat{x}_t$ (for proof, see assertion (c) of \cite[Theorem 1.17]{rockafellar2009variational}), $\hat{u}^t_{0:M-1}$ and $\hat{\theta}_t$ are continuous functions of $\hat{x}_t$. Then, it follows that $x_k, \tilde{x}_k, \hat{x}^{k+1-t}_t, \tilde{x}^{k+1-t}_t$ and $x^{\delta \theta}_k$ in Eq. \eqref{eq:TermIII-o-Eq2} are also continuous functions of $\hat{x}_t$ for all $k \in [t,t+M-1]$. Similarly, $\tilde{u}^{k+1-t}_{0:M-1}$ is a continuous function of $\hat{x}_t$ for all $k \in [t,t+M-1]$, because it is a continuous function of $\hat{x}^{k+1-t}_t$. Therefore $\phi_{k+1}(l, \hat{x}^{k+1-t}_{t}, \tilde{u}^{k+1-t}_{0:l-1};\theta^{\star})$ is a continuous function of $\hat{x}_t$ for all $l \in [0,M-1], ~ k \in [t,t+M-1]$.

Next, we prove by induction that, under the event $\theta^\star \in \widehat{\Theta}$, $\hat{x}_t$ is bounded by a constant that does not increase with $T_0$ and $T$. %Hence, $\tilde{z}_k$ is also bounded in $[t,t+M-1]$. 
Let's assume that the bound on $\hat{x}_t$ under the event $\theta^\star \in \widehat{\Theta}$ to be the constant $b$. Then, there exist functions $\overline{b}$ and $d$ with $dom(\overline{b}) = dom(d) = \mathbb{R}$ such that, the bound on $x_k, \tilde{x}_k, \hat{x}^{k+1-t}_t,  \tilde{x}^{k+1-t}_t, x^{\delta \theta}_k$ and $\phi_{k+1}(l, \hat{x}^{k+1-t}_{t}, \tilde{u}^{k+1-t}_{0:l-1};\theta^{\star})$ for all $k \in [t,t+M-1]$ is $\overline{b}(b)$ and the bound on $\tilde{u}^{k+1-t}$ and $\hat{u}^t_{k-t}$ for all $k \in [t,t+M-1]$ is $d(b)$. Let $\alpha_0$ be the local Lipschitz constant for $c_t$s in the compact set $\{(x,u): \norm{x} \leq \overline{b}(b), \norm{u} \leq d(b)\}$.

Recall from \eqref{eq:cost-to-go} that the optimal value of the above problem is ${V}_{t}(\hat{x}_{t}; \hat{\theta}_{t})$. Since $\rho(A^\star) < 1$, there exist constants $c_\rho > 0$ and $\lambda_\rho < 1$ such that $\norm{A^k} \leq c_\rho \lambda^k_\rho$. Also, please note that according to the O-MPC algorithm, $u^{\pi_{\mathrm{o}}}_{t} = \hat{u}^{t}_{0}$. Now, under the event $\theta^\star \in \widehat{\Theta}$,
\begin{align}
& V_{t+1}(\hat{x}_{t+1};\hat{\theta}_{t+1}) \nonumber \\
& = \sum_{k = 0}^{M-1} c_{t+1+k}(\phi_{t+1}(k, \hat{x}_{t+1}, \hat{u}^{t+1}_{0:k-1}(\hat{\theta}_{t+1});\hat{\theta}_{t+1}), \hat{u}^t_{k}(\hat{\theta}_{t+1})) \nonumber \\
& \stackrel{(a)}{\leq} \sum_{k = 0}^{M-1} c_{t+1+k}(\phi_{t+1}(k, \hat{x}_{t+1}, \hat{u}^{t+1}_{0:k-1}(\theta^{\star});\theta^{\star}), \hat{u}^{t+1}_{k}(\theta^{\star})) \nonumber \\
& \stackrel{(b)}{\leq} \sum_{k = 0}^{j-2} c_{t+1+k}(\phi_{t+1}(k, \hat{x}_{t+1}, \hat{u}^{t}_{1:k}(\hat{\theta}_t);\theta^{\star}), \hat{u}^t_{k+1}(\hat{\theta}_t))\nonumber \\
& + \sum_{k = 0}^{M-j} c_{t+j+k}(\phi_{t+j}(k, \tilde{x}^j_t, \tilde{u}^j_{0:k-1};\theta^{\star}), \tilde{u}^j_k) \nonumber \\
& \stackrel{(c)}{\leq} \sum_{k = 0}^{j-2} c_{t+1+k}(\phi_{t+1}(k, \hat{x}^1_{t}, \hat{u}^{t}_{1:k}(\hat{\theta}_t);\theta^{\star}), \hat{u}^t_{k+1}(\hat{\theta}_t)) \nonumber \\
& + \sum_{k = 0}^{M-j} c_{t+j+k}(\phi_{t+j}(k, \hat{x}^j_{t}, \tilde{u}^j_{0:k-1};\theta^{\star}), \tilde{u}^j_k) \nonumber \\
& + \alpha_0\sum_{k = 0}^{M-1} \norm{A(\theta^\star)^{k}\delta\theta_t\hat{z}_t} \nonumber \\
& \leq \sum_{k = 0}^{j-2} c_{t+1+k}(\phi_{t+1}(k, \hat{x}^1_{t}, \hat{u}^{t}_{1:k}(\hat{\theta}_t);\theta^{\star}), \hat{u}^t_{k+1}(\hat{\theta}_t)) \nonumber \\
& + \sum_{k = 0}^{M-1} c_{t+j+k}(\phi_{t+j}(k, \hat{x}^j_{t}, \tilde{u}^j_{0:k-1};\theta^{\star}), \tilde{u}^j_k) \nonumber \\
& + \alpha_0\sum_{k = 0}^{M-1} \norm{A(\theta^\star)^{k}\delta\theta_t\hat{z}_t} \nonumber \\
& \stackrel{(d)}{\leq} \sum_{k = 0}^{j-2} c_{t+1+k}(\phi_{t+1}(k, \hat{x}^1_{t}, \hat{u}^{t}_{1:k}(\hat{\theta}_t);\theta^{\star}), \hat{u}^t_{k+1}(\hat{\theta}_t)) \nonumber \\
& + V_{t+j}(\hat{x}^j_{t};\theta^{\star}) + \alpha_0\sum_{k = 0}^{M-1} \norm{A(\theta^\star)^{k}\delta\theta_t\hat{z}_t}  \nonumber \\
& \stackrel{(e)}{\leq} \sum_{k = 0}^{j-2} c_{t+1+k}(\phi_{t+1}(k, \hat{x}^1_{t}, \hat{u}^{t}_{1:k}(\hat{\theta}_t);\theta^\star), \hat{u}^t_{k+1}(\hat{\theta}_t)) \nonumber \\
& + \overline{\alpha} \sigma(\hat{x}^j_{t}) + \alpha_0\sum_{k = 0}^{M-1} \norm{A(\theta^\star)^{k}\delta\theta_t\hat{z}_t} \nonumber \\
& \stackrel{(f)}{\leq} \sum_{k = 0}^{j-2} c_{t+1+k}(\phi_{t+1}(k, \hat{x}^1_{t}, \hat{u}^{t}_{1:k}(\hat{\theta}_t);\theta^\star), \hat{u}^t_{k+1}(\hat{\theta}_t)) \nonumber \\
& + \overline{\alpha} \sigma(\hat{x}^j_{t}) + \alpha_0(\overline{b}(b)+d(b))\norm{\delta\theta_t}\sum_{k = 0}^{M-1} \norm{A(\theta^\star)^{k}} \nonumber \\
& \leq \sum_{k = 0}^{j-2} c_{t+1+k}(\phi_{t+1}(k, \hat{x}^1_{t}, \hat{u}^{t}_{1:k}(\hat{\theta}_t);\theta^\star), \hat{u}^t_{k+1}(\hat{\theta}_t)) \nonumber \\
& + \overline{\alpha} \sigma(\hat{x}^j_{t}) + \frac{\alpha_0c_\rho(\overline{b}(b)+d(b))\norm{\delta\theta_t}}{1-\lambda_\rho}. 
\end{align}
Here, we get $(a)$ by the fact that $\hat{\theta}_t$ is the optimal model at $t$, $(b)$ by the fact that the sequence of actions $\hat{u}^t_{1:j-1}(\hat{\theta}_t)$ for the first $j-1$ steps followed by $\tilde{u}^j_{0:M-j}$ is suboptimal to $\hat{u}^{t+1}_{0:M-1}(\theta)$, $(c)$ by using Eq. \eqref{eq:TermIII-o-Eq1} and Lipschitz condition on $c_t$s, $(d)$ by Eq. \eqref{eq:cost-to-go}, $(e)$ by Assumption \ref{as:stability}.$(ii)$ and $(f)$ by applying Cauchy-Schwarz to the last term. Similarly for $V_{t}(\hat{x}_{t};\hat{\theta}_{t})$,
\begin{align}
& V_{t}(\hat{x}_{t};\hat{\theta}_{t}) \nonumber \\
& = \sum_{k = 0}^{M-1} c_{t+k}(\phi_t(k, \hat{x}_t, \hat{u}^t_{0:k-1}(\hat{\theta}_t);\hat{\theta}_t), \hat{u}^t_{k}(\hat{\theta}_t)) \nonumber \\
& = \sum_{k = 0}^{M-1} c_{t+k}(\tilde{x}_{t+k}, \hat{u}^t_{k}(\hat{\theta}_t)) = \sum_{k = 0}^{M-1} c_{t+k}(x_{t+k} - x^{\delta \theta}_{t+k}, \hat{u}^t_{k}(\hat{\theta}_t)) \nonumber \\
& \stackrel{(g)}{\geq} \sum_{k = 0}^{M-1} c_{t+k}(x_{t+k}, \hat{u}^t_{k}(\hat{\theta}_t)) - \alpha_0 \sum_{k = 0}^{M-1} \norm{x^{\delta \theta}_{t+k}} \nonumber\\
& = \sum_{k = 0}^{M-1} c_{t+k}(x_{t+k}, \hat{u}^t_{k}(\hat{\theta}_t))\nonumber \\
& - \alpha_0 \sum_{k = 0}^{M-1} \norm{\sum_{j=t+1}^{t+k} (A^\star)^{t+k-j}(\delta \theta_{t})\tilde{z}_{j-1}} \nonumber \\
& \stackrel{(h)}{\geq} \sum_{k = 0}^{M-1} c_{t+k}(x_{t+k}, \hat{u}^t_{k}(\hat{\theta}_t)) \nonumber \\
& - \alpha_0 \sum_{k = 0}^{M-1}\sum_{j=t+1}^{t+k} \norm{(A^\star)^{t+k-j}}\norm{\delta \theta_{t}}\norm{\tilde{z}_{j-1}} \nonumber \\
& \geq \sum_{k = 0}^{M-1} c_{t+k}(x_{t+k}, \hat{u}^t_{k}(\hat{\theta}_t)) \nonumber \\
& - \alpha_0(\overline{b}(b)+d(b))\norm{\delta \theta_{t}}\sum_{k = 0}^{M-1}\sum_{j=t+1}^{t+k} \norm{(A^\star)^{t+k-j}} \nonumber \\
& \stackrel{(i)}{\geq} \sum_{k = 0}^{M-1} c_{t+k}(x_{t+k}, \hat{u}^t_{k}(\hat{\theta}_t)) - \frac{\alpha_0(\overline{b}(b)+d(b))c_\rho\norm{\delta \theta_{t}}M}{1 - \lambda_\rho} \nonumber \\
& \stackrel{(j)}{\geq} c_t(\hat{x}_t, u^{\pi_{\mathrm{o}}}_{t}) \nonumber \\
& + \sum_{k = 0}^{j-2} c_{t+1+k}(\phi_{t+1}(k, \hat{x}^1_{t}, \hat{u}^{t}_{1:k}(\hat{\theta}_t);\theta^{\star}), \hat{u}^{t}_{k+1}(\hat{\theta}_t)) \nonumber \\
& - \frac{\alpha_0(\overline{b}(b)+d(b))c_\rho\norm{\delta \theta_{t}}M}{1 - \lambda_\rho}.
\end{align}
Here, we get $(g)$ by applying the local Lipschitz condition for $c_t$s, $(h)$ by applying triangle inequality for norms and then applying Cauchy-Schwarz inequality, $(i)$ by using $\norm{(A^\star)^{k}} \leq c_\rho \lambda^k_\rho$ and summing over $j$ and $k$, $(j)$ by retaining the first $j-1$ terms in the first sum, using $u^{\pi_{\mathrm{o}}}_{t} = \hat{u}^{t}_{0}(\hat{\theta}_t)$, and the definition of $\hat{x}^1_t$ and $x_{t+1+k} = \phi_{t+1}(k, \hat{x}^1_{t}, \hat{u}^{t}_{1:k}(\hat{\theta}_t);\theta^{\star})$.

Now taking the difference between $V_{t+1}(\hat{x}_{t+1};\hat{\theta}_{t+1})$ and $V_{t}(\hat{x}_{t};\hat{\theta}_{t})$, 
\begin{align}
    & V_{t+1}(\hat{x}_{t+1};\hat{\theta}_{t+1}) - V_{t}(\hat{x}_{t};\hat{\theta}_{t})  \leq \overline{\alpha} \sigma(\hat{x}^j_{t}) \nonumber \\
    & - c_t(\hat{x}_t, u^{\pi_{\mathrm{o}}}_{t}) + \frac{\alpha_0c_\rho(\overline{b}(b)+d(b))\norm{\delta\theta_t}(M+1)}{1-\lambda_\rho}. \label{eq:Vdiff-o-rhc} 
\end{align}

By Assumption \ref{as:stability}.$(i)$, for all $j \in [1,M-1]$, $
\underline{\alpha}\sigma(\hat{x}^j_{t}) \leq c_{t+j}(\hat{x}^j_t, \hat{u}^t_{j}(\hat{\theta}_t))$. Then, summing over $j$ and adding $c_t(\hat{x}_t,\hat{u}^t_0(\hat{\theta}_t))$ to the right, we get
\begin{align} 
& \sum_{j=1}^{M-1} \underline{\alpha}\sigma(\hat{x}^j_{t}) \leq c_t(\hat{x}_t,\hat{u}^t_0(\hat{\theta}_t))  + \sum_{j=1}^{M-1}  c_{t+j}(\hat{x}^j_t, \hat{u}^t_{j}(\hat{\theta}_t)) \nonumber \\
& = c_t(\hat{x}_t,\hat{u}^t_0(\hat{\theta}_t)) \nonumber \\
& + \sum_{j=1}^{M-1}  c_{t+j}(\phi_{t}(j, \hat{x}_{t}, \hat{u}^{t}_{0:j-1}(\hat{\theta}_t);\theta^{\star}), \hat{u}^t_{j}(\hat{\theta}_t)) \nonumber \\
& = \sum_{j=0}^{M-1} c_{t+j}(\phi_{t}(j, \hat{x}_{t}, \hat{u}^{t}_{0:j-1}(\hat{\theta}_t);\theta^{\star}), \hat{u}^t_{j}(\hat{\theta}_t)) \nonumber \\
& \stackrel{(k)}{\leq} \sum_{j=0}^{M-1} c_{t+j}(\phi_{t}(j, \hat{x}_{t}, \hat{u}^{t}_{0:j-1}(\hat{\theta}_t);\hat{\theta}_t), \hat{u}^t_{j}(\hat{\theta}_t)) \nonumber \\
& + \alpha_0 \sum_{k = 0}^{M-1} \norm{x^{\delta \theta}_{t+k}} \nonumber \\
& \stackrel{(l)}{\leq} \sum_{j=0}^{M-1} c_{t+j}(\phi_{t}(j, \hat{x}_{t}, \hat{u}^{t}_{0:j-1}(\hat{\theta}_t);\hat{\theta}_t), \hat{u}^t_{j}(\hat{\theta}_t)) \nonumber \\
& + \frac{\alpha_0(\overline{b}(b)+d(b))c_\rho\norm{\delta \theta_{t}}M}{1 - \lambda_\rho} \nonumber \\
& \stackrel{(m)}{\leq} \sum_{j=0}^{M-1} c_{t+j}(\phi_{t}(j, \hat{x}_{t}, \hat{u}^{t}_{0:j-1}(\theta^\star);\theta^\star), \hat{u}^t_{j}(\theta^\star)) \nonumber \\
& + \frac{\alpha_0(\overline{b}(b)+d(b))c_\rho\norm{\delta \theta_{t}}M}{1 - \lambda_\rho} \nonumber \\
& \stackrel{(n)}{\leq} \overline{\alpha} \sigma(\hat{x}_t) + \frac{\alpha_0(\overline{b}(b)+d(b))c_\rho\norm{\delta \theta_{t}}M}{1 - \lambda_\rho} 
\end{align}
Here, we get $(k)$ by $(g)$, $(l)$ by computing the sum $\alpha_0 \sum_{k = 0}^{M-1} \norm{x^{\delta \theta}_{t+k}}$ as in steps $(g)$ to $(i)$, $(m)$ by the fact that $\hat{\theta}_t$ is the optimal model at $t$ and finally $(n)$ by Assumption \ref{as:stability}.$(ii)$. This implies that, there exists $j^{*} \in [1,M-1]$ such that
\beq 
\sigma(\hat{x}^{j^{*}}_{t}) \leq \frac{\overline{\alpha} \sigma(\hat{x}_t)}{\underline{\alpha}(M-1)} + \frac{\alpha_0(\overline{b}(b)+d(b))c_\rho\norm{\delta \theta_{t}}M}{\underline{\alpha}(M-1)(1 - \lambda_\rho)}. \label{eq:o-rhc-jstar}
\eeq 

Then, setting $j = j^{*}$ in Eq. \eqref{eq:Vdiff-o-rhc}, we get
\begin{align}
& V_{t+1}(\hat{x}_{t+1};\hat{\theta}_{t+1}) - V_{t}(\hat{x}_{t};\hat{\theta}_{t}) \leq \overline{\alpha} \sigma(\hat{x}^{j^{*}}_{t}) \nonumber \\
& - c_t(\hat{x}_t, u^{\pi_{\mathrm{o}}}_{t}) + \frac{\alpha_0c_\rho(\overline{b}(b)+d(b))\norm{\delta\theta_t}(M+1)}{1-\lambda_\rho} \nonumber \\
& \stackrel{(o)}{\leq} \frac{\overline{\alpha}^2 \sigma(\hat{x}_t)}{\underline{\alpha}(M-1)} - c_t(\hat{x}_t, u^{\pi_{\mathrm{o}}}_{t}) +  \left(\frac{\alpha_0c_\rho(\overline{b}(b)+d(b))(M+1)}{1-\lambda_\rho} \right. \nonumber \\
& \left. + \frac{\overline{\alpha}\alpha_0(\overline{b}(b)+d(b))c_\rho M}{\underline{\alpha}(M-1)(1 - \lambda_\rho)}  \right) \norm{\delta\theta_t} \stackrel{(p)}{=} \frac{\overline{\alpha}^2 \sigma(\hat{x}_t)}{\underline{\alpha}(M-1)} - c_t(\hat{x}_t, u^{\pi_{\mathrm{o}}}_{t})\nonumber \\
& + \left(\frac{\alpha_0c_\rho(\overline{b}(b)+d(b))(M+1)}{1-\lambda_\rho} \right. \nonumber \\
& \left. + \frac{\overline{\alpha}\alpha_0(\overline{b}(b)+d(b))c_\rho M}{\underline{\alpha}(M-1)(1 - \lambda_\rho)}  \right)\norm{\delta\theta_t} \nonumber \\
& \stackrel{(q)}{\leq} \left(\frac{\overline{\alpha}^2}{\underline{\alpha}^2(M-1)}-1\right)\underline{\alpha}\sigma(\hat{x}_t) + \left(\frac{\alpha_0c_\rho(\overline{b}(b)+d(b))(M+1)}{1-\lambda_\rho} \right. \nonumber \\
& \left. + \frac{\overline{\alpha}\alpha_0(\overline{b}(b)+d(b))c_\rho M}{\underline{\alpha}(M-1)(1 - \lambda_\rho)}  \right)\norm{\delta\theta_t}. \nonumber
\end{align} 
Here, we get $(o)$ by Eq. \eqref{eq:o-rhc-jstar}, $(p)$ by the fact that the factors accompanying $\norm{\delta\theta_t}$ are all constants, $(q)$ by Assumption \ref{as:stability}.$(i)$.

Let $\gamma = \frac{\overline{\alpha}^2}{\underline{\alpha}^2(M-1)}$. Since $M > (\overline{\alpha}/\underline{\alpha})^2+1$, we have $\gamma < 1$. Let $g(b,M) = \left(\frac{\alpha_0c_\rho(\overline{b}(b)+d(b))(M+1)}{1-\lambda_\rho} + \frac{\overline{\alpha}\alpha_0(\overline{b}(b)+d(b))c_\rho M}{\underline{\alpha}(M-1)(1 - \lambda_\rho)}  \right)$. Using these observations, we get
\begin{align}
 & V_{t+1}(\hat{x}_{t+1};\hat{\theta}_{t+1}) - V_{t}(\hat{x}_{t};\hat{\theta}_{t}) \leq -\left(1-\gamma\right)\underline{\alpha}\sigma(\hat{x}_t) \nonumber \\
 & + g(b,M)\norm{\delta\theta_t}  \leq -\left(1-\gamma\right)\frac{\underline{\alpha}}{\overline{\alpha}}V_{t}(\hat{x}_{t};\hat{\theta}_{t}) + g(b,M)\norm{\delta\theta_t}, \nonumber 
\end{align} 
where we used Assumption \ref{as:stability}.$(ii)$ to get the last inequality. This yields
\beq 
 V_{t+1}(\hat{x}_{t+1};\hat{\theta}_{t+1}) \leq \overline{\gamma}V_{t}(\hat{x}_{t};\hat{\theta}_{t}) + g(b,M)\norm{\delta\theta_t}, \nonumber 
\eeq 

$\overline{\gamma} = 1-\left(1-\gamma\right)\frac{\underline{\alpha}}{\overline{\alpha}}$. Under the event $\theta^\star \in \widehat{\Theta}$, $\norm{\delta \theta_t} \leq 2\beta(\delta)$ for all $t \geq T_0+1$. Therefore,
\beq 
 V_{t+1}(\hat{x}_{t+1};\hat{\theta}_{t+1}) \leq \overline{\gamma}V_{t}(\hat{x}_{t};\hat{\theta}_{t}) + \mathcal{O}(\frac{g(b,M)}{\sqrt{T_0}}). 
 \label{eq:Vt-bound-o-rhc}
\eeq

Now, by Assumption \ref{as:stability}$.(ii)$, and that $\hat{\theta}_t$ is optimistic, $V_{t}(\hat{x}_{t};\hat{\theta}_{t}) \leq \overline{\alpha}\sigma(\hat{x}_t)$, for all $t$. Also, by Assumption \ref{as:stability}.$(i)$, $V_t(x;\theta) \geq \underline{\alpha} \sigma(x)$ for any $\theta$. Hence, from Eq. \eqref{eq:Vt-bound-o-rhc}, we get
\beq
\sigma(\hat{x}_{t+1}) \leq \overline{\gamma}\overline{\alpha}/\underline{\alpha} \sigma(\hat{x}_{t}) + \mathcal{O}(\frac{g(b,M)}{\underline{\alpha}\sqrt{T_0}}). \nonumber 
\eeq

Given the expression for $\overline{\gamma}$,
\beq 
\overline{\alpha}/\underline{\alpha}\overline{\gamma} = \overline{\alpha}/\underline{\alpha} - 1 + \gamma. \nonumber 
\eeq 

Let $\delta_1 > 0$ be such that $\overline{\alpha}/\underline{\alpha} = 2 - \delta_1$. Then, there exists a $M > 0$ sufficiently large such that $\gamma = \frac{\overline{\alpha}^2}{\underline{\alpha}^2(M-1)} = \delta_2 < \delta_1$. Let $\delta_3 = \delta_1 - \delta_2$. Then,
\beq 
\overline{\alpha}/\underline{\alpha}\overline{\gamma} = \overline{\alpha}/\underline{\alpha} - 1 + \gamma = 1 - \delta_3 < 1. \nonumber
\eeq 

Let $\overline{\gamma}_1 = 1 - \delta_3$. Then, for every $b > 0$ there exists $T_0 > 0$ sufficiently large and a constant $\tilde{b} \geq b$ such that 
\begin{align}
& \sigma(\hat{x}_{t+1}) \leq \overline{\gamma}_1\sigma(\hat{x}_t) + \mathcal{O}(\frac{g(\tilde{b},M)}{\overline{\alpha}\sqrt{T_0}}) \nonumber \\
& \leq   \sigma(\hat{x}_t) \leq \max_{x: \norm{x}\leq \tilde{b}} \sigma(x) ~ \Rightarrow ~ \norm{\hat{x}_{t+1}}\leq \tilde{b}. \nonumber  
\end{align}
 
Now, we can choose $b$ to be such that $b = \norm{y_{T_0+1}}$. This $b$ is a constant that does not increase with $T_0$ because $\norm{y_{T_0+1}}$ is bounded by a constant that does not increase with $T_0$. This is because $\rho(A^\star) < 1$ and $\norm{u^{\pi_\mathrm{o}}_t} \leq (n+1)m$ in the estimation phase. For this $b$, we can pick $M$, $T_0$ and $\tilde{b}$ as above. Since $\sigma(x)$ is a continuous function, $\tilde{b}$ is a constant that does not increase with $T_0$ and $T$. 

Then, by mathematical induction, it follows that under the event $\theta^\star \in \widehat{\Theta}$, $\norm{\hat{x}_t} \leq \tilde{b}$ for all $t \geq T_0+1$, where $\tilde{b}$ is a constant that does not increase with $T_0$ and $T$. 

Consequently, Eq. \eqref{eq:Vt-bound-o-rhc} is true for all $t \geq T_0+1$, under the event $\theta^\star \in \widehat{\Theta}$. Therefore, applying the inequality in Eq. \eqref{eq:Vt-bound-o-rhc} repeatedly, we get under the event $\theta^\star \in \widehat{\Theta}$
\begin{align}
& V_{t}(\hat{x}_{t};\hat{\theta}_{t}) \leq \overline{\gamma}^{t-T_0-1}V_{T_0+1}(\hat{x}_{T_0+1};\hat{\theta}_{T_0+1}) \nonumber \\
& + \mathcal{O}(\frac{1}{\sqrt{T_0}})\sum_{k=T_0+1}^t \overline{\gamma}^{k-T_0-1}  = \overline{\gamma}^{t-T_0-1}V_{T_0+1}(\hat{x}_{T_0+1};\hat{\theta}_{T_0+1}) \nonumber \\
& + \mathcal{O}(\frac{1}{\sqrt{T_0}}),  \nonumber 
\end{align} 
where the last equality follows from the fast that $\overline{\gamma} < 1$. Using $c_t(\hat{x}_t, u^{\pi_{\mathrm{o}}}_{t}) \leq V_{t}(\hat{x}_{t};\hat{\theta}_{t})$, and summing over all $t \geq T_0+1$, we get
\begin{align} 
& \sum_{t = T_0+1}^T c_t(\hat{x}_t, u^{\pi_{\mathrm{o}}}_{t}) \leq \sum_{t = T_0+1}^T \overline{\gamma}^{t-T_0-1}V_{T_0+1}(\hat{x}_{T_0+1};\hat{\theta}_{T_0+1})\nonumber \\
& + \sum_{t = T_0+1}^T \mathcal{O}(\frac{1}{\sqrt{T_0}}) \stackrel{(r)}{=} \mathcal{O}(V_{T_0+1}(\hat{x}_{T_0+1};\hat{\theta}_{T_0+1})) \nonumber \\
& + \mathcal{O}(\frac{T}{\sqrt{T_0}}) \stackrel{(s)}{=} \mathcal{O}(\frac{T}{\sqrt{T_0}}). \nonumber 
\end{align}

Here, we get (r) by the fact that $\overline{\gamma} < 1$ and (s) by the fact that $\hat{\theta}_{T_0+1}$ is optimistic and by Assumption \ref{as:stability}.$(ii)$, both of which imply $V_{T_0+1}(\hat{x}_{T_0+1};\hat{\theta}_{T_0+1}) \leq \overline{\alpha} \sigma(\hat{x}_{T_0+1}) = \mathcal{O}(1)$. This also implies that, with probability greater than $1-\delta$, $J_{T_{0}+1:T}(u^{\pi_{\mathrm{o}}}_{T_{0}+1:T}; \hat{\theta}_{T_0+1:T}) -  J_{T_{0}+1:T}(u^{\pi^{\star}}_{T_{0}+1:T};\theta^{\star}) \leq  \mathcal{O}(\frac{T}{\sqrt{T_0}})$, which concludes the proof.
\end{proof}

\end{appendices} 

%\subsubsection*{Acknowledgements}
%All acknowledgments go at the end of the paper, including thanks to reviewers who gave useful comments, to colleagues who contributed to the ideas, and to funding agencies and corporate sponsors that provided financial support. 
%To preserve the anonymity, please include acknowledgments \emph{only} in the camera-ready papers.

\end{document}